\newif\ifsub
\renewcommand{\cite}{\citep}
\title{Emergent specialization from participation dynamics and multi-learner retraining}
\date{June, 2022, \emph{updated April, 2024}}
\newcommand\blfootnote[1]{%
  \begingroup
  \renewcommand\thefootnote{}\footnote{#1}%
  \addtocounter{footnote}{-1}%
  \endgroup
}
\author[1]{Sarah Dean}
\author[2]{Mihaela Curmei}
\author[3]{Lillian J. Ratliff}
\author[3]{Jamie Morgenstern}
\author[3]{Maryam Fazel}
\affil[1]{Cornell University}
\affil[2]{University of California, Berkeley}
\affil[3]{University of Washington}
\begin{document}

\maketitle
\newcommand{\numgroup}{n}
\newcommand{\numplayer}{m}
\newcommand{\indgroup}{i}
\newcommand{\indplayer}{j}
\newcommand{\group}{subpopulation}
\newcommand{\groupm}{\mathsf{subpop}}
\newcommand{\groups}{subpopulations}
\newcommand{\player}{learner}
\newcommand{\playerm}{\mathsf{learner}}
\newcommand{\players}{learners}
\newcommand{\groupsize}{\beta}
\newcommand{\participation}{participation}
\newcommand{\partic}{\alpha}
\newcommand{\groupalpha}{\alpha_{i, :}}
\newcommand{\playeralpha}{\alpha_{:, j}}
\newcommand{\allocation}{allocation}
\newcommand{\allocations}{allocations}
\newcommand{\alloc}{\nu}
\newcommand{\eq}{\mathsf{eq}}
\newcommand{\total}{\mathsf{total}}
\newcommand{\partition}{\mathcal{S}}
\newcommand{\alltheta}{\Theta}

\newcommand{\distr}{\mathcal{D}}
\newcommand{\risk}{\mathcal{R}}
\newcommand{\loss}{\ell}
\newcommand{\diag}{\mathrm{diag}}
\newcommand{\Diag}{\mathrm{Diag}}
\newcommand{\tr}{\mathrm{tr}}

\newcommand{\mc}{\mathcal}
\newcommand{\mb}{\mathbb}

\newcommand{\E}{\mathbb{E}}
\newcommand{\R}{\mathbb{R}}
\renewcommand{\P}{\mathbb{P}}

\newtheorem{theorem}{Theorem}[section]
\newtheorem{lemma}[theorem]{Lemma}
\newtheorem{prop}[theorem]{Proposition}
\newtheorem{coro}[theorem]{Corollary}
\newtheorem*{remark}{Remark}
\newtheorem{corollary}{Corollary}[theorem]

\theoremstyle{definition}
\newtheorem{definition}[theorem]{Definition}
\newtheorem{example}[theorem]{Example}

\newcommand{\sd}[1]{\textcolor{teal}{[SD: #1]}}
\newcommand{\mi}[1]{\textcolor{blue}{[MC: #1]}}
\newcommand{\mic}[1]{\textcolor{blue}{[#1]}}
\newcommand{\mf}[1]{\textcolor{magenta}{[MF: #1]}}

\newcommand{\propchange}[1]{\textcolor{blue}{#1}}
\renewcommand{\propchange}[1]{{#1}} \blfootnote{$^1$sdean@cornell.edu}

\begin{abstract}
Numerous online services are data-driven: the behavior of users affects the system's parameters, and 
the system's parameters affect the users' experience of the service, which 
in turn affects the way users may interact with the system. 
For example, people may choose to use a service only for tasks that already works well, or they may choose to switch to a different service. 
These adaptations influence the ability of a system to learn about a population of users and tasks in order to improve its performance broadly. 
In this work, we analyze a class of such dynamics---where users allocate their participation amongst services to reduce the individual risk they experience, and services update their model parameters to reduce the service's risk on their current user population. We refer to these dynamics as 
\emph{risk-reducing}, which cover a broad class of common model updates including gradient descent and multiplicative weights. 
For this general class of dynamics, we show that asymptotically stable equilibria are always segmented, with sub-populations allocated to a single learner. Under mild assumptions, the utilitarian social optimum is a stable equilibrium. 
In contrast to previous work, which shows that repeated risk minimization can result in  representation disparity and high overall loss %
with a single learner  \citep{hashimoto2018fairness,miller2021outside}, 
we find that repeated myopic updates with multiple learners lead to better outcomes. 
We illustrate the phenomena via a simulated example initialized from real data.
\end{abstract}

\ifsub
    \section{INTRODUCTION}
\else
    \section{Introduction}
\fi

Many online platforms, including social media networks, personalized recommendation engines, and advertising auction systems, 
  collect user data and make incremental adjustments to the models they use to personalize content. These continuous updates are motivated by many factors, though large amongst them is the fact that the systems operate in non-stationary environments, where the preferences of their users change as the system operates. 
Changes in user preferences might occur exogeneously of service settings (e.g., global events might spur interest in new topics) or endogeneously (e.g., increasing the ranking of certain content on a platform might lead the content to ``go viral"). 
The fact that user behavior might depend on service settings can take on many forms: people may learn to ignore or avoid clicking on advertisements; they may choose to use the service only for tasks at which it already works well; or they may choose to switch to a different service if they have a better experience with the second service. The latter two examples of adaptation, where users might opt for services that already suit their needs, affect the system's capacity to learn about its user base and improve its overall performance.

In this work, we study a particular form of endogenously shifting distributions
over multiple rounds, in contexts where individuals prefer to use services whose predictions are more accurate for them. 
Much of the existing work on endogeneous distribution shift focuses on users who modify their features to achieve desired outcomes, as in strategic classification 
\citep{hardt2016strategic} and related problems 
\citep{perdomo2020performative,miller2021outside}. While important, this model of data
manipulation does not capture the most straightforward way that individuals express their preferences in a market: by choosing amongst alternative providers. 
In fact, recent work has shown that in the presence of a choice of participation between competing providers, individuals do not have an incentive to perform costly data
manipulations~\citep{hardt2022performative}.

 Consider as an example a social media platform.  If
the platform recommends content that does not appeal to the tastes of younger generations, these users will spend a smaller fraction of their time on that platform. 
This results in positive (i.e., self-reinforcing) feedback
loop, where a services's poor performance on young customers dissuades
them from using the service, leading to less data and
diminishing weight placed on making better predictions for young
customers in the future.  Within a single service, these
effects may lead to representation
disparity~\citep{hashimoto2018fairness}.

However, in a broader ecosystem, individuals can  choose \emph{amongst} services.  
If a new social media platform can predict the tastes of younger users more accurately, the younger users may spend more of their time on the new service, and correspondingly less on an existing platform.
The new platform will then receive more data and improve its performance on young customers, while the old platform's predictions may deteriorate, reinforcing their exit. Similarly, in the context of Large Language Models (LLM), if one 
LLM performs particularly well on creative tasks and another on answering homework questions, the distribution of prompts each receives may shift towards their existing expertise.
Such feedback loops can also arise in settings such as music recommendation or healthcare, where demographic and socio-economic factors explain some of the emerging specialization (see examples in Appendix \ref{app:motivation}).
In this paper, we study the dynamics of
populations apportioning themselves amongst services, and services that
choose predictors based on their observed user population.
Our {first contribution} is to introduce and formalize this general setting. In
Section~\ref{sec:setting} we introduce \emph{risk reducing} populations and services who choose their actions myopically,
incrementally improving their utility based on current conditions.
Our {second contribution} is to present a complete characterization of stable
fixed points for this general class of dynamics in Section~\ref{sec:main_results}. 
By drawing a connection between the dynamics and the \emph{total risk}, our {third contribution} is to
characterize the implications of this dynamic in terms of a
utilitarian notion of \emph{social welfare}, and argue that increasing
the number of available services leads to better outcomes in
terms of accurate predictions and user experience.  In
Section~\ref{sec:experiments} we illustrate our theory with simulated experiments and conclude with a discussion of future work in Section~\ref{sec:conclusion}.

\ifsub
   \section{RELATED WORK}
\label{sec:relatedwork}
\else
   \section{Related Work}
\label{sec:relatedwork}
\fi

\label{sec:relatedwork}

The study of equilibria in the presence of utility optimizing agents has classical roots in game theory, and
optimization over
decision-dependent probabilities
is classically studied by stochastic
optimization and control (e.g., the review article by
\citet{hellemo2018decision});
we narrow
our focus to the most relevant literature on this %
as it arises in machine learning systems.

\textbf{Endogenous Distribution Shifts.}  In the study of machine
learning systems, a large body of literature studies
exogenous distribution shifts such as covariate, label, or concept
drift~\citep{quinonero2008dataset}. A more recent trend is to study
shifts in the underlying data distribution due to %
endogenous reactions, for example due to strategic
behavior exhibited by a user population. 
The work of
\citet{perdomo2020performative} introduces 
\emph{performative prediction} as a model capturing user reaction via endogenous distribution shifts. This work models a single decision-maker 
facing a risk minimization problem subject to an 
underlying decision-dependent data distribution. Following its
introduction, several relevant solution concepts have been explored and algorithms for achieving them proposed
\citep{izzo2021learn,drusvyatskiy2020stochastic,
  mendler2020stochastic,miller2021outside}. 
  A variant of the single
decision-maker performative prediction problem studies
  time-dependent
dynamics of the data distribution,  
with both exogenous~\citep{wood2021online,cutler2021stochastic} and endogenous
\citep{ray2022decision,brown2022performative} sources.
These works primarily consider strategic covariate shifts in a single distribution.
In contrast, we consider a mixture of distributions: sub-populations of %
users whose participation choices result in 
attrition and retention dynamics which are not studied in the aforementioned distribution shift literature.
\textbf{Multiple Decision-Makers.}  Endogenous distribution shift has
also been studied in settings with multiple decision-makers as a
continuous game. For instance, the multi-player performative
prediction problem extends the original problem by allowing for
multiple competing decision-makers \citep{narang2022multiplayer,
  piliouras2022multi, wood2022stochastic}.  
This line of work
differs from ours in that the population is modeled as homogeneous and stateless.
These works focus on characterizing the existence and uniqueness of
different types of competitive equilibria for the game, and
analyze %
learning dynamics that lead to different
equilibrium concepts.
In contrast, in our paper the focus is on asymptotically stable points
(equilibrium) for the combined dynamical system resulting from myopic optimization by non-anticipating decision-makers and stateful user
participation updates.

\textbf{Retention.} 
User retention in machine learning systems
is closely related to the
population participation dynamics we consider
\citep{hashimoto2018fairness,zhang2019group}.  In settings with
multiple sub-populations of users of different types, %
the question of retention has been explored in parallel with the issue
of fairness. \citet{hashimoto2018fairness} coined the term
\emph{representation disparity} for the phenomenon in which the
traditional approach of minimizing average performance leads to high
overall accuracy coupled with low accuracy on minority groups, causing an exodus of said groups. 
For single learners, systems which instead perform robust risk minimization avoid
such disparity.

Our work generalizes the single-learner retention setting and analyzes the fixed points of dynamics between
multiple systems and populations without modifying risk functions to be robust. 
\citet{ginart2021competing} also consider user choice between multiple learning systems, with an empirical investigation
and  theoretical results in restricted settings focused on finite sample effects.
In contrast, we propose a general class of  \emph{risk reducing} dynamics and
develop a comprehensive theoretical understanding.

\ifsub
    \section{FRAMEWORK AND SETTING}\label{sec:setting}
\else
    \section{Framework and Setting}\label{sec:setting}
\fi

We consider a setting where  the population of individuals is composed of $\numgroup$ \groups{}
spreading their participation amongst  $\numplayer$ \players{} (service providers or decision-makers).
Figure~\ref{fig:cartoon} illustrates a simple example.
Each \group{} $\indgroup\in\{1,\dots,\numgroup\}=: [\numgroup]$ has features and labels distributed according to a fixed distribution $(x,y)=: z \sim \distr_\indgroup$ and makes up $\groupsize_\indgroup$ proportion of the total population, so that $\sum_{\indgroup=1}^\numgroup \groupsize_\indgroup = 1$.
An $\partic_{\indgroup\indplayer}$ proportion of
\group{} $\indgroup$ is associated to each \player{} $\indplayer\in[\numplayer]$, normalized so that
$\sum_{\indplayer=1}^\numplayer \partic_{\indgroup\indplayer} = 1$. The \groups{} therefore 
redistribute their \participation{} among the various \players{}. Further, to model the ability of \groups{} to opt-out, one can include  a static ``null \player{}''.%

\begin{figure}[h]
  \begin{center}
    \includegraphics[width=.5\textwidth]{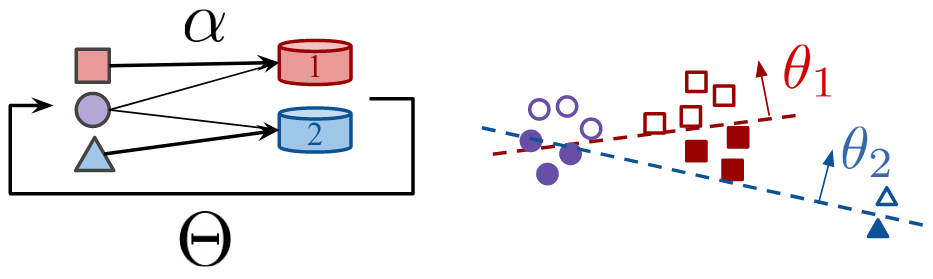}
  \end{center}
    \caption{$n=3$ \groups{} ($\square$, $\ocircle$, $\triangle$) select among $m=2$ \players{} (red, blue) based on classification accuracy with respect to label (solid, hollow). Parameters $\Theta=(\theta_1,\theta_2)$ (decision lines) update in response to \groups{} participation $\alpha_{i,j}$. At the current state, the circle \group{} will shift participation towards blue \player{}.}
    \label{fig:cartoon}
\end{figure}

A \group{} can be as broad as a demographic or affinity group and as granular as a single individual.
The allocation of a \group{} can represent several things: the fraction of a \group{} which uses a given service, or the \emph{fraction of time} users from that \group{} choose to spend using  \players{}' systems.
Accordingly, the relative size $\beta_i$ of the population can represent the proportion of individuals or the \emph{total time} individuals spend. 
This framework also allows for a \group{} to represent \emph{types of tasks or activities} a user wishes to accomplish, allocating these tasks 
to \players{} based upon which systems perform best on which tasks.
\emph{The only assumption we make about any \group{} is that individual samples comprising it are i.i.d.} 

Throughout, we assume that there are fewer \players{} than \groups{}, $\numplayer\leq \numgroup$.
Each \player{} $\indplayer$ observes data from the \groups{} who participate in it.
Formally it observes features and labels drawn from the mixture distribution determined by the \participation{} and \group{} sizes:
$$(x,y)_j = z_j\sim \frac{\sum_{\indgroup=1}^\numgroup  \partic_{\indgroup\indplayer} \groupsize_\indgroup \distr_\indgroup}{\sum_{\indgroup=1}^\numgroup\partic_{\indgroup\indplayer} \groupsize_\indgroup}$$
Learners make predictions or decisions according to a parameter $\theta_\indplayer\in\R^d$.
Beyond the information encoded in the features and labels, the \players{} are unaware of which \group{} individual data points are.

The quality of predictions made by parameter $\theta_j\in\R^d$ for an individual instance $z_j$  is quantified by the loss $\loss(\theta_j; z_j)$. 
The quality of $\theta_j$ for a \group{} is quantified by the average loss, i.e. the \emph{risk} $\risk_\indgroup(\theta_j)=\E_{z\sim \distr_\indgroup}[\loss(\theta_j; z)]$.  %
Throughout, we will make the additional assumption that the risk function for each \group{} $\risk_\indgroup(\theta)$ is convex and differentiable. %
Figure~\ref{fig:lin_2d_ex} illustrates an example of the risk functions arising in linear regression.

\begin{figure*}[h]
    \centering
    \includegraphics[width=0.32\textwidth]{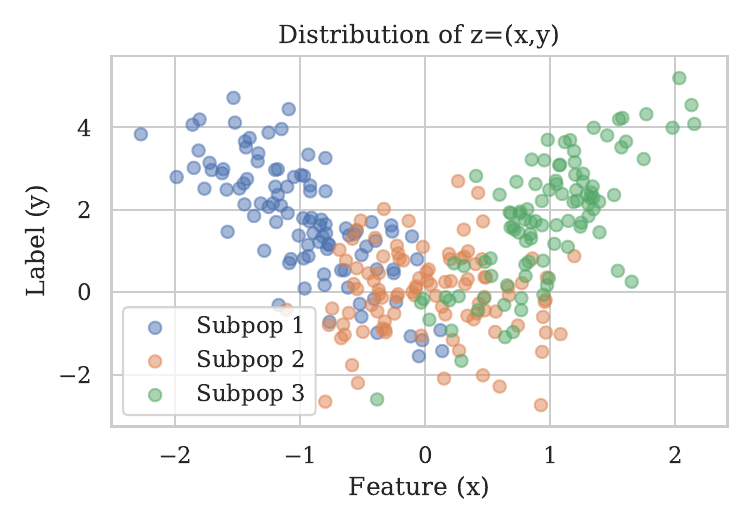}
    \includegraphics[width=0.32\textwidth]{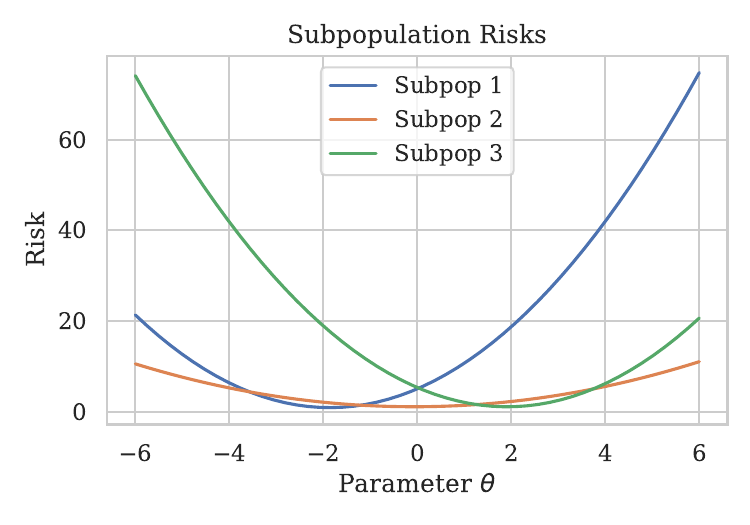}
    \includegraphics[width=0.32\textwidth]{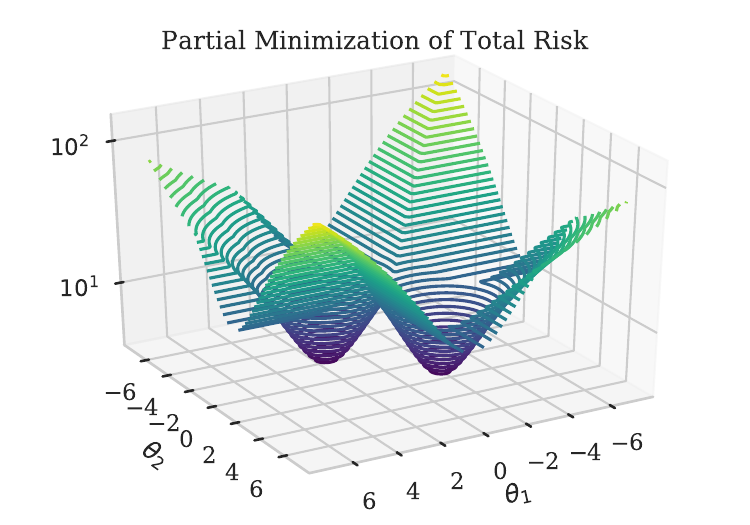}
    \caption{An example arising from least-squares linear regression with $n=3$ \groups{} and $m=2$ \players{}. Left: The distribution of $z=(x,y)$, colored by \group{}. Middle: The \group{} risks $\risk_\indgroup(\theta)$ arising from least-squares linear regression $\ell(\theta; z) = (y-\theta x)^2$. Right: A visualization of the non-convex total risk as a function of \player{} parameters, via the partial minimization over \group{} allocation: $\min_{\alpha} \sum_{i=1}^3 \sum_{j=1}^2 \alpha_{ij} \mathcal R_i(\theta_j) = \sum_{i=1}^3 \min\{\mathcal R_i(\theta_1), \mathcal R_i(\theta_2)\}$. }
    \label{fig:lin_2d_ex}
\end{figure*}

\subsection{Decision dynamics of \players{} and \groups{}}\label{sec:dynamics}
Subpopulations and \players{} react to each other; Updates in \group{} allocations lead to updates in \players{} parameters $\Theta^t = (\theta^t_1,\dots,\theta^t_m)$, and vice versa.
We introduce a broad class of update dynamics by way of a canonical example.
Suppose that each \group{} $i$ updates its allocation by increasing the participation proportional to the quality of various models; for example, by spending more time on recommendation platforms that suggest more engaging content.
Recalling that the risk (i.e. average loss) quantifies quality, this manifests as a \emph{multiplicative weights update}:  $\partic_{\indgroup\indplayer}^{t+1}  \propto \partic_{\indgroup\indplayer}^t\cdot \exp(-\gamma  \risk_\indgroup(\theta_\indplayer))$ for $j\in[m]$ and some parameter $\gamma>0$.
This is similar to the retention function studied by~\citet{hashimoto2018fairness} and has connections to {replicator dynamics}, a  foundational evolutionary dynamic that can be interpreted as a process of information diffusion and imitation~\citep{sandholm2020evolutionary}.

Recall that each \player{} $j$ observes data from the mixture distribution $(\sum_{\indgroup=1}^\numgroup\partic_{\indgroup\indplayer} \groupsize_\indgroup)^{-1}\sum_{\indgroup=1}^\numgroup  \partic_{\indgroup\indplayer} \groupsize_\indgroup \distr_\indgroup$ for which we use the shorthand $\mathcal D(\alpha_{:,j})$, where $\playeralpha \in\R^\numgroup$ denotes the vector of allocations from all \groups{} to \player{} $\indplayer$.
Suppose the \players{} update their parameters using gradient descent to reduce the average loss over this data (e.g. to improve the prediction of user engagement). 
Setting aside finite sample issues, for a step size $\gamma_t$ the gradient update takes the form $\theta_j^{t+1} =  \theta_j^t - \gamma_t \nabla_{\theta}  \mathop{\mathbb{E}}_{z\sim \mathcal D(\alpha_{:,j})}\left[\ell(\theta_j^t; z)\right]$.
This is an incremental version of the 
\emph{repeated retraining dynamics} which have been studied in the single learner setting by~\citet{hashimoto2018fairness,perdomo2020performative}.

Despite the apparent simplicity of independent update rules, the evolution of \groups{} and \players{} is highly coupled. The sequential interaction between \groups{} and \players{} leads to complex nonlinear dynamics: i.e. multiplicative weights over non-stationary risks (due to \player{} updates) and gradient descent over non-stationary data distributions (due to \group{} updates).
To study this complex behavior, we now formalize key properties.

The first observation is that updates are \emph{ stateful}, with \group{} allocations and \player{} parameters depending on previous values.
This motivates a description of the
dynamics arising from interactions between $n$ \groups{} and $m$ \players{} in terms of the overall state $\partic\in \Delta_m\times \dots \times \Delta_m =: \Delta_m^n$ and $\alltheta \in \R^{m\times d}$.
We thus define for each \group{} $i$ a general Markovian \allocation{} function $\alloc^t_\indgroup : \Delta_{\numplayer}\times \R^{ \numplayer\times d} \to  \Delta_{\numplayer}$ which describes the participation update $\groupalpha^{t+1} = \alloc^t_\indgroup \left(\groupalpha^{t}, \alltheta^t \right)$ at time $t$, where $\groupalpha\in \Delta_\numplayer$ denotes the vector of allocations from the \group{} $\indgroup$ to all \players{}.
Similarly, define $\mu^t_j: \R^{d} \times  \R^{n} \to  \R^{d}$ so
\player{} $j$ updates their parameter according to $\theta^{t+1}_\indplayer = \mu^t_{\indplayer}(\theta^t_\indplayer, \playeralpha^t)$.

The second observation is that the basis for the updates is the average loss, i.e. risk. This motivates the following definition: given participation $\alpha$ and parameters $\Theta$,
the average risk experienced by each \group{} $i$ and each 
\player{} $j$ is:
\begin{align*}
   \bar\risk^{\groupm}_\indgroup \left(\groupalpha, \alltheta \right) 
   &:= \mathop{\mathbb{E}}_{j\sim \alpha_{i,:}}\left[\mathop{\mathbb{E}}_{z\sim \mathcal D_i}\left[\ell(\theta_j;z)\right]\right],
    \\
    \bar\risk^{\playerm}_\indplayer\left(\playeralpha, \theta_{\indplayer} \right) &:= \mathop{\mathbb{E}}_{z\sim \mathcal D(\alpha_{:,j})}\left[\ell(\theta_j; z)\right].
\end{align*}
In the recommendation example,
$\bar\risk^{\groupm}$ captures the dissatisfaction with content for a \group{} and 
$\bar\risk^{\playerm}$ corresponds to the average prediction error of the platform.
Intuitively, multiplicative weights reduces the average \group{} risk while gradient descent reduces the average \player{} risk.

\begin{definition}[Reducing and Minimizing Dynamics]
A $u$ update rule is \emph{$P$-reducing} w.r.t. $v$
if $P(u^{t+1},v^t) \leq P(u^t, v^t)$ for all $t$ and any sequence of $v^t$. %
It is further \emph{{$P$-minimizing in the limit}} if
the inequality is strict when $u^{t}$ is not a minimizer and $\lim_{t\to\infty} P(u^{t}, v) = \min_{u} P(u, v)$.
\end{definition}

We call a \group{} $i$ \emph{risk reducing} (resp. minimizing) when the allocation update on $\partic_{i,:}$ is $\bar\risk^{\groupm}_\indgroup$-reducing (resp. minimizing in the limit) with respect to $\Theta$.
Similarly, we call a \player{} $j$ \emph{risk reducing} (resp. minimizing) when the parameter update on $\theta_j$ is $\bar\risk^{\playerm}_\indplayer$-reducing (resp. minimizing in the limit) with respect to $\partic_{:,j}$.

We remark that the notion of risk minimizing in the limit is reasonable for \groups{} because their average risk is linear in $\alpha_{i,:}$. It is also reasonable for \players{} because their average risk is convex in $\theta_j$ (due to the assumption that risks $\mathcal R_i(\theta_j)$ are convex). 
However, risk-reducing/minimizing is only a property defined with respect to the participation $\alpha$ or parameter $\Theta$ observed at a previous time step. Thus it does not necessarily hold that $\bar\risk^{\playerm}_{\indplayer}$ or  $\bar\risk^{\groupm}_{\indgroup}$ decrease when the state evolves $(\alpha^{t}, \alltheta^{t}) \to (\alpha^{t+1}, \alltheta^{t+1})$ by sequential updates of $\nu^t$ and $\mu^t$. Our experiments (Figure \ref{fig:cost_curves}) illustrate the non-monotonicity of the coupled updates.

\begin{example}[Semi-static participation]
Suppose a population has a constant allocation of 20\% to one learner, while the remaining 80\% is allocated to the remaining learners inversely proportional to the learner's risk on that population. This is risk reducing but not risk minimizing in the limit.
\end{example}

\begin{example}[Full risk minimization]\label{ex:full_player}
Suppose that a \player{} updates its parameter to minimize the average risk function
$\bar\risk^{\playerm}_{\indplayer}(\playeralpha^t, \cdot)$ at each timestep.
This has been studied as \emph{repeated retraining dynamics} in the single learner case by~\citet{hashimoto2018fairness,perdomo2020performative}.
\end{example}

\begin{restatable}{prop}{ExampleRR}
\label{prop:example_rr}
A \group{}  $i$ updating their participation with multiplicative weights is risk minimizing in the limit if $\gamma>0$ and $\partic^{0}_{\indgroup\indplayer}>0$  $\forall j$.
A \player{} updating its parameter with gradient descent is risk minimizing in the limit when 
the risk functions $\mathcal R_i(\theta)$ are $L$ smooth and the step size satisfies $\gamma^t<\frac{2}{L}$,
$\sum_{t=0}^\infty \gamma^t  = \infty$, and  $\sum_{t =1}^\infty (\gamma^t)^2 < \infty$. 
\end{restatable}
We provide a proof and detail several other examples of risk reducing dynamics in Appendix~\ref{sec:appendix}.

\subsection{Equilibria and stability}

We focus on the equilibrium states resulting from risk-reducing  \groups{} and \players{}.

\begin{definition}[Equilibrium] The state $(\partic^\eq, \alltheta^\eq)$ is an equilibrium state if it is stationary under the dynamics update $\{\nu_i^t\}$, $\{\mu_j^t\}$; i.e. that for all $\indgroup \in [\numgroup]$ and  $ \indplayer \in [\numplayer]$:
\[\groupalpha^\eq = \alloc^t_\indgroup(\groupalpha^\eq, \alltheta^\eq)\quad\text{and}\quad \theta_{\indplayer}^\eq = \mu^t_{\indplayer}(\playeralpha^\eq, \theta_{\indplayer}^\eq)\:.\]
\end{definition}
If \players{} and \groups{} are in an equilibrium state, they will remain that way indefinitely. 
However, some equilibrium configurations may be unstable to perturbations. 
\begin{definition}[Stable Equilibrium] The state $(\partic^\eq, \alltheta^\eq)$ is a \emph{stable} equilibrium state if it is an equilibrium and for each $\epsilon_\partic, \epsilon_\theta>0$, there exist $\delta_{\partic} , \delta_{\theta}> 0$ such that  \[\begin{matrix} \Vert \partic^0 - \partic^\eq \Vert < \delta_{\partic},\\ \Vert \alltheta^0 - \alltheta^\eq \Vert < \delta_{\theta} \end{matrix} \implies \begin{matrix} \Vert \partic^t - \partic^\eq \Vert \leq\epsilon_\partic,\\  \Vert \alltheta^t - \alltheta^\eq \Vert \leq \epsilon_\theta\end{matrix}~\forall~t\geq 0.\] 
It is further \emph{asymptotically stable} if $\lim_{t\to \infty} \Vert \partic^t - \partic^\eq \Vert = 0$ and $\lim_{t\to \infty} \Vert \alltheta^t - \alltheta^\eq \Vert = 0$.
\end{definition}
Stability analysis identifies qualitatively different equilibrium states.
For the class of risk reducing dynamics that we study, equilibria may be unstable, stable, or asymptotically stable;
Appendix \ref{sec:stability ex} presents examples.
While a quantitative understanding of convergence may also be of interest, it would require stronger assumptions on the behavior of \groups{} and \players{}; here we favor generality and leave this to future work.
Furthermore, characterizing stable equilibria sets the foundation for understanding high probability behavior of systems under noisy updates which are risk reducing only in expectation~\citep{kushner1967stochastic}.
This sets the stage for finite sample risk minimization or multi-agent user models, a challenge which we leave to future work.

\ifsub
    \section{MAIN RESULTS}\label{sec:main_results}
\else
    \section{Main Results}\label{sec:main_results}
\fi
We study a large class of feedback dynamics between risk reducing \players{} and \groups{} described by the sequential updates: $\alpha^{t+1} = \alloc^t(\alpha^t,\alltheta^t)$ and $\alltheta^{t+1} = \mu^t(\alpha^{t+1},\alltheta^t)$.
Our analysis allows for \players{} and \groups{} who exhibit a diverse range of behaviors.
We do not require that every \player{} or every \group{} update their parameter or allocation in the same manner or even at every timestep, allowing for any number of round-robin schemes.
Our only assumption on \player{} and \group{} updates is that they are  risk reducing or minimizing.

Figure~\ref{fig:summary} presents a summary of the equilibria characterization that we present in this section.
All omitted proofs can be found  in Appendix \ref{app:results}.

\ifsub
\begin{figure}[h]
\centering
\begin{tikzpicture}[
    node/.style={%
      draw,
      rectangle,
      rounded corners,
      align=center,
      font=\small,
    },
    leaf/.style={
      draw=red,
      fill=red!10,
      rounded corners,
      align=center,
      font=\small
    },
    decision/.style={
      midway,
      fill=white,
      font=\footnotesize,
    }
  ]
    \node [node] (A1) {$\Theta_0\in\arg\min_\Theta \risk^\mathsf{total}(\alpha_0,\Theta)$?};
    \path (A1) ++(-100:16mm) node [node] (A2) {$\Theta_0$ unique\\minimizer?};
    \path (A2) ++(-100:17mm) node [node] (A) {$\alpha_0$ segmented?};
    \path (A1) ++(-29:33mm) node [leaf] (A3) {not an\\ equilibrium};
    \path (A2) ++(-29:33mm) node [leaf] (A4) {no general classification\\(Appendix \ref{sec:stability ex})};
    \path (A) ++(-100:17mm) node [node] (B) {Strict preference?\\(Eq. (1) in Thm. \ref{thm:segmented_market_eq})};
    \path (B) ++(-110:19mm) node [leaf] (F) {asymptotically\\ stable};
    \path (B) ++(-55:20mm) node [leaf] (I) {unstable};
    \path (A) ++(-28:35mm) node [node] (C) {Support is over\\ \emph{risk equivalent} learners?\\ (Eq. (2) in Thm. \ref{thm:balanced_eq})};
    \path (C) ++(-115:31mm) node [node] (D) {Support is over\\ \emph{risk optimal} learners?\\ (Eq. (2) in Thm. \ref{thm:balanced_eq})};
    \path (C) ++(-60:32mm) node [leaf] (E) {not an \\equilibrium};
    \path (D) ++(-55:22mm) node [leaf] (G) {unstable};
    \path (D) ++(-130:25mm) node [leaf] (H) {may be stable\\ (balanced equilibrium)};

    \draw (A1) -- (A2) node [decision] {yes};
    \draw (A1) -- (A3) node [decision] {no};
    \draw (A2) -- (A) node [decision] {yes};
    \draw (A2) -- (A4) node [decision] {no};
    \draw (A) -- (B) node [decision] {yes};
    \draw (A) -- (C) node [decision] {no};
    \draw (B) -- (F) node [decision] {yes};
    \draw (B) -- (I) node [decision] {no};
    \draw (C) -- (D) node [decision] {yes};
    \draw (C) -- (E) node [decision] {no};
    \draw (D) -- (H) node [decision] {yes};
    \draw (D) -- (G) node [decision] {no};
\end{tikzpicture}
\caption{A summary of our main results on equilibria classification for a given participation $\alpha_0$ and model parameters $\Theta_0$. These results hold for dynamics which are risk minimizing in the limit and loss functions that are convex.}\label{fig:summary}
\end{figure}
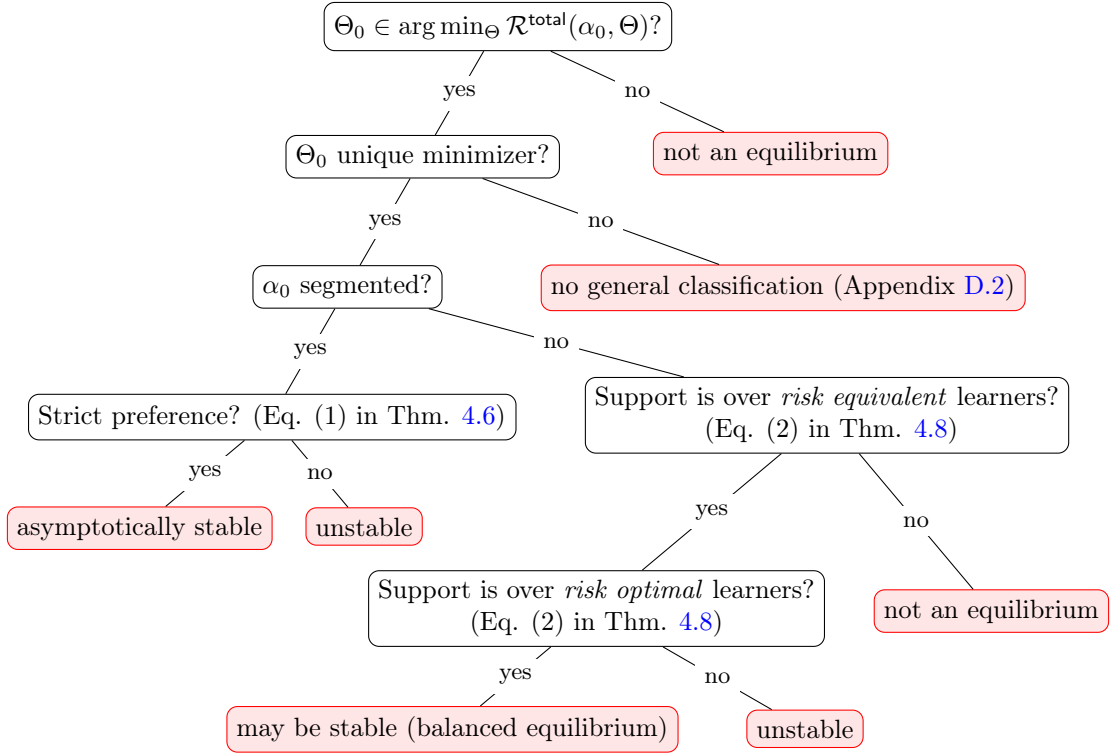
 \else
\begin{figure}[h]
\centering
\begin{tikzpicture}[
    node/.style={%
      draw,
      rectangle,
      rounded corners,
      align=center,
      font=\small,
    },
    leaf/.style={
      draw=red,
      fill=red!10,
      rounded corners,
      align=center,
      font=\small
    },
    decision/.style={
      midway,
      fill=white,
      font=\footnotesize,
    }
  ]
    \node [node] (A1) {$\Theta_0\in\arg\min_\Theta \risk^\mathsf{total}(\alpha_0,\Theta)$?};
    \path (A1) ++(-120:20mm) node [node] (A2) {$\Theta_0$ unique minimizer?};
    \path (A2) ++(-120:20mm) node [node] (A) {$\alpha_0$ segmented?};
    \path (A1) ++(-25:40mm) node [leaf] (A3) {not an equilibrium};
    \path (A2) ++(-20:51mm) node [leaf] (A4) {no general classification (Appendix \ref{sec:stability ex})};
    \path (A) ++(-120:20mm) node [node] (B) {Strict preference? (Eq. (1) in Thm. \ref{thm:segmented_market_eq})};
    \path (B) ++(-140:23mm) node [leaf] (F) {asymptotically stable};
    \path (B) ++(-50:19mm) node [leaf] (I) {unstable};
    \path (A) ++(-15:66mm) node [node] (C) {Support is over \emph{risk equivalent} learners?\\\ (Eq. (2) in Thm. \ref{thm:balanced_eq})};
    \path (C) ++(-140:40mm) node [node] (D) {Support is over \emph{risk optimal} learners?\\ (Eq. (2) in Thm. \ref{thm:balanced_eq})};
    \path (C) ++(-50:34mm) node [leaf] (E) {not an equilibrium};
    \path (D) ++(-30:32mm) node [leaf] (G) {unstable};
    \path (D) ++(-140:25mm) node [leaf] (H) {may be stable (balanced equilibrium)};

    \draw (A1) -- (A2) node [decision] {yes};
    \draw (A1) -- (A3) node [decision] {no};
    \draw (A2) -- (A) node [decision] {yes};
    \draw (A2) -- (A4) node [decision] {no};
    \draw (A) -- (B) node [decision] {yes};
    \draw (A) -- (C) node [decision] {no};
    \draw (B) -- (F) node [decision] {yes};
    \draw (B) -- (I) node [decision] {no};
    \draw (C) -- (D) node [decision] {yes};
    \draw (C) -- (E) node [decision] {no};
    \draw (D) -- (H) node [decision] {yes};
    \draw (D) -- (G) node [decision] {no};
\end{tikzpicture}
\caption{A summary of our main results on equilibria classification for a given participation $\alpha_0$ and model parameters $\Theta_0$. These results hold for dynamics which are risk minimizing in the limit and loss functions that are convex.}\label{fig:summary}
\end{figure}
 \fi

\subsection{Total Risk Reduction}

\begin{definition}[Total Risk]
The \emph{total risk} of all \groups{} over all \players{} is %
the weighted sum \[\risk^{\total}(\alpha, \alltheta) :=\sum_{\indgroup=1}^\numgroup\sum_{\indplayer=1}^{\numplayer}\groupsize_\indgroup \partic_{\indgroup\indplayer}\risk_\indgroup(\theta_\indplayer).\]
\end{definition}

The total risk maps $\Delta_\numplayer^\numgroup \times \R^{\numplayer\times d}\to\R$.
While our assumption that the loss is convex implies that the total risk is convex in $\Theta$, it is not jointly convex in $(\alpha, \alltheta)$, illustrated in the right panel of Figure~\ref{fig:lin_2d_ex}.

Our first result shows that the total risk $\risk^{\total}(\alpha^t, \alltheta^t)$ is non-increasing over time.

\begin{restatable}{prop}{totalRiskDec}
\label{prop:total_risk_dec}
For any risk-reducing \group{} and \player{} dynamics, the total risk is non-increasing:
$\risk^{\total}(\alpha^{t+1}, \alltheta^{t+1}) \leq \risk^{\total}(\alpha^t, \alltheta^t), \forall t$.
If \groups{}  and \players{} are risk minimizing in the limit, then the total risk is strictly decreasing unless $(\alpha^t, \alltheta^t)$ is a local minimizer of $\risk^{\total}$. %
\end{restatable}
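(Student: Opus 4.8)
The plan is to exploit the sequential structure of the update: in one timestep, first the allocations move ($\alpha^t \to \alpha^{t+1}$) while the parameters are held fixed at $\alltheta^t$, and then the parameters move ($\alltheta^t \to \alltheta^{t+1}$) while the allocations are held fixed at $\alpha^{t+1}$. I would therefore decompose the total change into two pieces and bound each separately by a risk-reducing argument:
\[
\risk^{\total}(\alpha^{t+1},\alltheta^{t+1}) - \risk^{\total}(\alpha^{t},\alltheta^{t})
= \underbrace{\left[\risk^{\total}(\alpha^{t+1},\alltheta^{t}) - \risk^{\total}(\alpha^{t},\alltheta^{t})\right]}_{\text{(I) allocation step}}
+ \underbrace{\left[\risk^{\total}(\alpha^{t+1},\alltheta^{t+1}) - \risk^{\total}(\alpha^{t+1},\alltheta^{t})\right]}_{\text{(II) parameter step}}.
\]

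For step (I), the key observation is that the total risk additively decomposes over \groups{} at a fixed $\alltheta$: $\risk^{\total}(\alpha,\alltheta^t) = \sum_{\indgroup=1}^{\numgroup} \groupsize_\indgroup\, \bar\risk^{\groupm}_\indgroup(\groupalpha, \alltheta^t)$, since $\bar\risk^{\groupm}_\indgroup(\groupalpha,\alltheta^t) = \sum_{\indplayer} \partic_{\indgroup\indplayer}\risk_\indgroup(\theta^t_\indplayer)$. Because each \group{} is risk-reducing with respect to the current parameters $\alltheta^t$, we have $\bar\risk^{\groupm}_\indgroup(\partic^{t+1}_{\indgroup,:},\alltheta^t) \le \bar\risk^{\groupm}_\indgroup(\partic^{t}_{\indgroup,:},\alltheta^t)$ for every $\indgroup$; multiplying by $\groupsize_\indgroup > 0$ and summing gives (I) $\le 0$. (If some \group{} does not update this round, its term contributes exactly zero, which is still $\le 0$, so the ``not every \group{} updates'' generality is handled automatically.) For step (II), similarly, at fixed allocations $\alpha^{t+1}$ the total risk decomposes over \players{}: writing $w_\indplayer := \sum_{\indgroup} \partic^{t+1}_{\indgroup\indplayer}\groupsize_\indgroup \ge 0$, we have $\risk^{\total}(\alpha^{t+1},\alltheta) = \sum_{\indplayer=1}^{\numplayer} w_\indplayer\, \bar\risk^{\playerm}_\indplayer(\playeralpha^{t+1},\theta_\indplayer)$. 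Each \player{} is risk-reducing with respect to the mixture induced by $\alpha^{t+1}$, i.e.\ $\bar\risk^{\playerm}_\indplayer(\playeralpha^{t+1},\theta^{t+1}_\indplayer) \le \bar\risk^{\playerm}_\indplayer(\playeralpha^{t+1},\theta^{t}_\indplayer)$; multiplying by $w_\indplayer$ and summing gives (II) $\le 0$. (A \player{} with $w_\indplayer = 0$ contributes zero regardless.) Adding (I) and (II) yields the non-increasing claim.

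For the strict version: if $(\alpha^t,\alltheta^t)$ is not an equilibrium, then either some \group{}'s allocation is not stationary or some \player{}'s parameter is not stationary (after the allocation step). In the first case, under risk-minimizing-in-the-limit dynamics the corresponding \group{} term in (I) strictly decreases (since $\partic^{t+1}_{\indgroup,:}\neq\partic^t_{\indgroup,:}$ forces strict inequality by definition), so (I) $< 0$ while (II) $\le 0$. In the second case, one must be a little careful: non-equilibrium of $(\alpha^t,\alltheta^t)$ means $\theta^t_\indplayer \neq \mu_\indplayer(\alpha^{\eq},\theta^t_\indplayer)$ at the \emph{equilibrium} allocation, but here the \player{} updates against $\alpha^{t+1}$. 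I would split: if the allocation step already changed something, we are in the first case and done; if $\alpha^{t+1} = \alpha^t$ (allocations stationary), then since $(\alpha^t,\alltheta^t)$ is not an equilibrium, some \player{} must have $\theta^{t+1}_\indplayer \neq \theta^t_\indplayer$, which by the risk-minimizing-in-the-limit definition forces strict decrease in that \player{}'s term of (II), and moreover $w_\indplayer > 0$ because otherwise that \player{}'s risk $\bar\risk^{\playerm}_\indplayer$ is not even well-defined / the \player{} observes no population and has no reason to move (one can fold this edge case into the modeling assumption that only \players{} with positive mass update). Thus the sum is strictly negative.

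I expect the main obstacle to be the strict-decrease argument in the case where the allocation step is a no-op but a \player{} still moves: matching ``not an equilibrium of the full system'' with ``strict decrease in (II)'' requires the observation that when $\alpha^{t+1}=\alpha^t$ the \player{} update is against exactly the equilibrium-candidate allocation, so non-stationarity of $\theta^t_\indplayer$ under $\mu_\indplayer(\playeralpha^t,\cdot)$ transfers directly; plus the minor but necessary hygiene of ruling out zero-mass \players{} (or positing that they do not update). Everything else is the two-line convexity-free averaging argument above.
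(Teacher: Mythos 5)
Your proposal is correct and follows essentially the same route as the paper's proof: both rest on the two equivalent decompositions of the total risk (a $\beta$-weighted sum of \group{} risks and a mass-weighted sum of \player{} risks) combined with the sequential update structure to telescope the one-step change into two nonpositive pieces. Your handling of the strict-decrease case and of zero-mass \players{} is in fact somewhat more careful than the paper's, which disposes of the risk-minimizing-in-the-limit claim in one sentence.
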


\begin{proof}[Proof Sketch]
First note that the total risk can be decomposed into either a weighted sum of average \group{} risk or average \player{} risk. Thus the fact that \player{} and \group{} dynamics are risk reducing ensures that the total risk is decreasing after the sequential updates.
\end{proof}

Thus, the total risk acts like a potential function for the feedback dynamics of \players{} and \groups{}.
When the \group{} and \player{} dynamics are risk minimizing in the limit, 
there is a strong connection between properties of the total risk function and equilibria of the dynamics.

\begin{restatable}{theorem}{eqMinConnection}
\label{prop:eq_min_connection}
For any \players{} and \groups{} who are risk minimizing in the limit, an equilibrium $(\partic^\eq, \alltheta^\eq)$ is asymptotically stable if 
it is an isolated local minimizer of the total risk $\risk^{\total}$. {If it is not a local minimizer of the total risk, then it is not stable.}
\end{restatable}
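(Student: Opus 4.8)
The plan is to use $\risk^{\total}$ as a common Lyapunov (potential) function: by Proposition~\ref{prop:total_risk_dec} it is non-increasing along every trajectory, and strictly decreasing off equilibria under the risk-minimizing-in-the-limit hypothesis. The first real step is a dictionary between equilibria and $\risk^{\total}$. Using the defining property of ``risk minimizing in the limit'' --- a block's update is the identity exactly when that block already minimizes its average risk --- an equilibrium $(\partic^\eq,\alltheta^\eq)$ is precisely a \emph{block-coordinate minimizer} of $\risk^{\total}$: each $\groupalpha^\eq$ minimizes the linear map $\groupalpha \mapsto \sum_{\indplayer}\partic_{\indgroup\indplayer}\risk_\indgroup(\theta^\eq_\indplayer)$ over $\Delta_\numplayer$ (all mass on the \players{} best for \group{} $\indgroup$), and each $\theta^\eq_\indplayer$ is the unique minimizer of the strongly convex map $\theta_\indplayer\mapsto\bar\risk^{\playerm}_\indplayer(\playeralpha^\eq,\theta_\indplayer)$. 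I would isolate two lemmas. (a) Since $\risk^{\total}$ is convex (indeed linear, resp.\ strongly convex) in each of these blocks, every local minimizer of $\risk^{\total}$ is a block-coordinate minimizer, hence an equilibrium. (b) If some \group{} has two or more best \players{} at $(\partic^\eq,\alltheta^\eq)$, then moving that \group's allocation along the corresponding simplex edge with $\alltheta^\eq$ fixed leaves $\risk^{\total}$ constant, so $(\partic^\eq,\alltheta^\eq)$ is not a strict local minimizer; conversely, if every \group{} has a unique best \player{} at $(\partic^\eq,\alltheta^\eq)$, then by continuity this remains so throughout a neighborhood, so every equilibrium there has allocation $\partic^\eq$ and then (by strong convexity) parameters $\alltheta^\eq$, making $(\partic^\eq,\alltheta^\eq)$ an isolated point of the equilibrium set. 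Thus ``isolated local minimizer'' is equivalent here to ``strict local minimizer'' and implies ``the only equilibrium in a neighborhood.''

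For the ``only if'' direction, suppose $(\partic^\eq,\alltheta^\eq)$ is asymptotically stable, with attraction radius $\delta$. If it were not a local minimizer of $\risk^{\total}$, there would be a start within $\delta$ at which $\risk^{\total}<\risk^{\total}(\partic^\eq,\alltheta^\eq)$; monotonicity keeps $\risk^{\total}$ strictly below $\risk^{\total}(\partic^\eq,\alltheta^\eq)$ for all time, while the trajectory converges to $(\partic^\eq,\alltheta^\eq)$, contradicting continuity of $\risk^{\total}$. If some other equilibrium lay within $\delta$, the trajectory initialized there would never move, again contradicting convergence to $(\partic^\eq,\alltheta^\eq)$. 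Hence $(\partic^\eq,\alltheta^\eq)$ is a local minimizer \emph{and} an isolated equilibrium; since local minimizers are equilibria by lemma (a), it is an isolated local minimizer. The first of these arguments, stopped before the convergence step, also drives the ``not a local minimizer $\Rightarrow$ not stable'' clause: from a start near $(\partic^\eq,\alltheta^\eq)$ with smaller total risk, $\risk^{\total}$ stays strictly below $\risk^{\total}(\partic^\eq,\alltheta^\eq)$ forever; examining the $\omega$-limit set (on which $\risk^{\total}$ is constant, and which by Proposition~\ref{prop:total_risk_dec} consists of equilibria) shows the trajectory must leave every neighborhood whose only equilibrium is $(\partic^\eq,\alltheta^\eq)$ --- which, by lemma (b), is a genuine neighborhood in the non-degenerate case.

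For the ``if'' direction, let $(\partic^\eq,\alltheta^\eq)$ be an isolated --- hence strict --- local minimizer; fix $\rho$ so that on $B_\rho$ it is the unique minimizer of $\risk^{\total}$ and the unique equilibrium. Assuming the composite update map $T$ is continuous (true for the dynamics of Section~\ref{sec:setting}), I first obtain Lyapunov stability: for $\epsilon<\rho$, choose $\eta>0$ with $T(B_\eta)\subseteq B_\epsilon$; any single step exiting $B_\epsilon$ must start in the compact annulus $\{\eta\le\Vert\cdot-(\partic^\eq,\alltheta^\eq)\Vert\le\epsilon\}$, where $\risk^{\total}$ exceeds $\risk^{\total}(\partic^\eq,\alltheta^\eq)$ by some margin $c>0$; so a trajectory whose initial total risk is below $\risk^{\total}(\partic^\eq,\alltheta^\eq)+c$ never exits $B_\epsilon$, and by continuity of $\risk^{\total}$ such starts fill a ball $B_\delta$. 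A trajectory from $B_\delta$ is then bounded, so its $\omega$-limit set is nonempty, compact, $T$-invariant, and --- $\risk^{\total}$ being constant on it and $T$ strictly decreasing $\risk^{\total}$ off equilibria --- consists only of equilibria in $\overline{B_\epsilon}$; since $(\partic^\eq,\alltheta^\eq)$ is the only one, the $\omega$-limit set is $\{(\partic^\eq,\alltheta^\eq)\}$, i.e.\ the trajectory converges to it, giving asymptotic stability.

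The main obstacle is not the Lyapunov/LaSalle bookkeeping but making the dictionary rigorous: pinning down the equilibrium set from the definitions (including the edge case of a \player{} with no assigned population, where $\bar\risk^{\playerm}_\indplayer$ is undefined), proving that isolated local minimizers of $\risk^{\total}$ are exactly the strict ones and are isolated equilibria, and --- for the ``not a local minimizer $\Rightarrow$ not stable'' clause --- ruling out (or separately treating) degenerate configurations with a continuum of equilibria of strictly decreasing total risk accumulating at the point. A secondary issue is continuity of the update maps: the stated dynamics satisfy it, but it is not automatic from the risk-reducing definition, and it is used both in the Lyapunov estimate and in the invariance of the $\omega$-limit set.
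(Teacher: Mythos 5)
Your proposal is correct and follows essentially the same route as the paper: total risk as a Lyapunov/potential function via Proposition~\ref{prop:total_risk_dec}, with the three implications (stable $\Rightarrow$ isolated local min, isolated local min $\Rightarrow$ asymptotically stable, not a local min $\Rightarrow$ unstable) handled by monotone-decrease-along-trajectories, a Lyapunov stability argument, and an instability argument respectively. The only difference is that you inline the LaSalle/Chetaev machinery (via $\omega$-limit sets) where the paper cites Theorems~\ref{thm:1.2} and~\ref{thm:1.5} of \citet{bof2018lyapunov} as black boxes, and the degenerate-equilibria caveat you flag for the instability clause is equally present (and equally unaddressed) in the paper's application of Theorem~\ref{thm:1.5}.
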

\begin{proof}[Proof Sketch] By Proposition \ref{prop:total_risk_dec} the function $V(\partic, \alltheta):= \risk^{\total}(\partic, \alltheta) - \risk^{\total}(\partic^\eq, \alltheta^{\eq})$ is potential function for the autonomous dynamical system $(\partic^t, \alltheta^t) \to (\partic^{t+1}, \alltheta^{t+1})$. The stability result follow from Lyapunov arguments.
\end{proof}

The connection between stability and the total risk function is significant in at least two ways: first, it means that under general classes of myopic and self-interested behaviors on the part of \groups{} and \players{}, the total risk is driven to at least a local minimum.
Second, it is a technically useful connection that will enable us to characterize and classify the stable equilibria for dynamics which are risk minimizing in the limit.
We remark that Theorem~\ref{prop:eq_min_connection} leaves open the question of stability for equilibria which are non-isolated minima of the total risk function.
In  Appendix~\ref{sec:stability ex}, we provide examples which show that such points may be asymptotically stable, stable, or unstable depending on the particular instantiation of dynamics.
The following existence result further motivates our focus dynamics which are risk minimizing, rather than just reducing.

\begin{restatable}{coro}{eqExistence}\label{coro:existence}
    Equilibria exist when learners and subpopulations are risk minimizing in the limit and the total risk function has %
    isolated local minima. They may not exist otherwise.
\end{restatable}
\begin{proof}[Example of dynamics without equilibria]
Consider subpopulations with risk functions minimized at the same value $\theta^*$. If learners use full risk minimization, the setting lacks isolated minima because the total risk is uniform across all allocations $\alpha$. Assuming that risk-minimizing subpopulations randomly choose among equivalent learners, no equilibrium exists as allocations randomly switch between learners once the learners converge to the optimum $\theta^*$.
\end{proof}

\subsection{Segmented and Balanced Equilibria}

\begin{definition}[Segmented allocation]
An allocation is \emph{segmented} if $\partic_{\indgroup\indplayer}\in \{0,1\}$ for all $\indgroup,\indplayer$.
\end{definition}

In a segmented allocation, each \group{} is associated with a single \player{}, and thus the population is partitioned across learners.
For allocation dynamics like multiplicative weights, such configurations are clearly equilibria for any parameter choice $\Theta$ on the part of the \players{}. 
We thus consider the set of possible segmented equilibria and characterize which are asymptotically stable.

\begin{restatable}{theorem}{segmentedMarketEq}
\label{thm:segmented_market_eq}
Suppose \players{} and \groups{} are risk minimizing in the limit, $\alpha^\eq$ is segmented, and $\risk^\total(\alpha^\eq,\alltheta)$ {has a unique minimizer $\alltheta^\eq$}.
Define a mapping $\gamma:[\numgroup]\to[\numplayer]$ such that $\gamma(\indgroup)=\indplayer$ is the \player{} with nonzero mass in $\partic^\eq_{i,:}$.
If 
every \group{} strictly prefers their current \player{}:
\begin{align} \label{eq:nec_as}
\risk_\indgroup(\theta^\eq_{\gamma(\indgroup)}) < \risk_\indgroup(\theta^\eq_{\indplayer})\:,
\end{align}
for all  $\indgroup$ and \players{} $\indplayer\neq \gamma(\indgroup)$, 
then
$(\alpha^\eq,\alltheta^{\eq})$ is an asymptotically stable equilibrium.
If 
there is a \group{} who would strictly prefer to switch \players{},
then $(\alpha^\eq,\alltheta^{\eq})$ is not stable.
\end{restatable}
When risks are strongly convex, there is always such a unique minimizer $\alltheta^\eq$.
In particular, in a segmented allocation, each $\theta_j^\eq$ minimizes the average loss over the group of \groups{} assigned to them.

\begin{restatable}{coro}{convexHull}
\label{coro:convex_hull}
{Suppose that risk functions satisfy $\risk_\indgroup(\theta) < \risk_\indgroup(\theta') \Longleftrightarrow \Vert \theta - \phi_{\indgroup} \Vert < \Vert \theta' - \phi_{\indgroup} \Vert$ for $\phi_\indgroup$ the \group{} optimal parameter.}
Then in an asymptotically stable segmented equilibrium, the convex hulls of the grouped \groups{} optimal parameters $\{\phi_\indgroup\}$ are non-intersecting.
\end{restatable}

\begin{proof}[Proof Sketch]
Consider a partition where the convex hulls intersect for some pair of \players{}.
Then there exists at least one \group{} who would be better off switching to the other \player{}, and thus the risk condition in Theorem~\ref{thm:segmented_market_eq} cannot hold.
\end{proof}
Applying the Corollary to the example in Figure~\ref{fig:lin_2d_ex}, we see that a segmented equilibrium with \group{} 1 and 3 participating in the same learner cannot be stable. 
Theorem~\ref{thm:segmented_market_eq} leaves open the question of stability in the case
that the risks in Equation~\eqref{eq:nec_as} are equal.
Under such \emph{risk equivalence}, is it natural to consider
equilibria where a \group{} has support over multiple \players{}.

\begin{restatable}{theorem}{balancedEq}
\label{thm:balanced_eq}
Consider dynamics which are risk minimizing in the limit and an $\alpha^\eq$ with any \group{} $\indgroup$ having nonzero support on set of two or more \players{} $\indplayer\in\mathcal J$. \propchange{Assume risks are strongly convex} and define $\alltheta^\eq = \arg\min\risk^\total(\alpha^\eq,\alltheta)$.
Then $(\partic^\eq,\alltheta^\eq)$ {cannot be stable unless} it is ``balanced'' in the sense that \players{} in $\mathcal J$ are \emph{risk equivalent} and \emph{optimal} for $\indgroup$, i.e.
for all $j,j'\in\mathcal J$, 
\begin{equation}\risk_\indgroup(\theta^\eq_\indplayer)=\risk_\indgroup(\theta^\eq_{\indplayer'})\quad\text{and}\quad \nabla \risk_\indgroup(\theta_\indplayer^\eq)=0\:.%
\end{equation}
{If it is balanced, so are} all allocations for \group{} $\indgroup$ with support over $\mathcal J$.
Finally, all stable equilibria must be either balanced or segmented.
\end{restatable}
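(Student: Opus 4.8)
The plan is to invoke Theorem~\ref{prop:eq_min_connection}, whose last clause guarantees that a stable equilibrium must be a local minimizer of the total risk $\risk^{\total}$, and then to read off the claimed structure from this. The device for doing so is the partially-minimized risk
\[g(\partic):=\min_{\alltheta}\risk^{\total}(\partic,\alltheta)=\risk^{\total}\big(\partic,\alltheta^\star(\partic)\big),\]
where $\alltheta^\star(\partic)$ is the minimizing parameter profile, unique in every coordinate carrying positive mass by strong convexity; in particular $\alltheta^\eq=\alltheta^\star(\partic^\eq)$. Since $\risk^{\total}(\partic,\alltheta)=\sum_{\indgroup,\indplayer}\groupsize_\indgroup\risk_\indgroup(\theta_\indplayer)\,\partic_{\indgroup\indplayer}$ is \emph{linear} in $\partic$ for each fixed $\alltheta$, the function $g$ is a pointwise infimum of linear functions and hence \emph{concave} on $\Delta_\numplayer^\numgroup$. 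Using continuity of $\partic\mapsto\alltheta^\star(\partic)$ near $\partic^\eq$, a local minimizer $(\partic^\eq,\alltheta^\eq)$ of $\risk^{\total}$ gives a local minimizer $\partic^\eq$ of $g$.

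Now I would exploit concavity. Let $F=\prod_\indgroup F_\indgroup$ be the smallest face of $\Delta_\numplayer^\numgroup$ containing $\partic^\eq$, with $F_\indgroup$ the simplex face spanned by $\mathrm{supp}(\partic^\eq_{\indgroup,:})$; by hypothesis $F_\indgroup$ is spanned by $\{e_\indplayer:\indplayer\in\mathcal J\}$ and $\partic^\eq\in\mathrm{relint}(F)$. A concave function attaining a local minimum at a relative-interior point of a convex set is constant on a relative neighborhood of that point (along any line through the point one may move both ways, and a concave univariate function with an interior local minimum is locally constant there). Hence $g\equiv g(\partic^\eq)$ on a relatively open $U\ni\partic^\eq$ in $F$. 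On the other hand the affine map $L(\partic):=\risk^{\total}(\partic,\alltheta^\eq)$ satisfies $L\ge g$ everywhere, with $L(\partic^\eq)=g(\partic^\eq)$, so $L$ is also constant $(\equiv g(\partic^\eq))$ on $U$, and therefore $g=L$ on $U$.

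The equality $g(\partic)=\risk^{\total}(\partic,\alltheta^\eq)$ for $\partic\in U$ says precisely that $\alltheta^\eq$ remains a risk-minimizing profile under the reallocation $\partic$, so by strong convexity $\alltheta^\star(\partic)=\alltheta^\eq$ on $U$. Writing the learners' first-order condition $\sum_{\indgroup}\partic_{\indgroup\indplayer}\groupsize_\indgroup\nabla\risk_\indgroup(\theta^\eq_\indplayer)=0$ at both $\partic$ and $\partic^\eq$ and subtracting --- and noting that within each column only \group{} $\indgroup$'s entry is altered --- gives $(\partic_{\indgroup\indplayer}-\partic^\eq_{\indgroup\indplayer})\,\groupsize_\indgroup\,\nabla\risk_\indgroup(\theta^\eq_\indplayer)=0$ for every $\indplayer\in\mathcal J$. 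Choosing $\partic\in U$ that shifts a little of \group{} $\indgroup$'s mass between two members of $\mathcal J$ --- possible since $|\mathcal J|\ge 2$ and $\partic^\eq_{\indgroup\indplayer}>0$ on $\mathcal J$ --- makes the scalar factor nonzero, so $\nabla\risk_\indgroup(\theta^\eq_\indplayer)=0$, i.e.\ $\theta^\eq_\indplayer=\phi_\indgroup$, for all $\indplayer\in\mathcal J$. Risk-equivalence $\risk_\indgroup(\theta^\eq_\indplayer)=\risk_\indgroup(\theta^\eq_{\indplayer'})$ is then immediate, and because $\partic^\eq$ is an equilibrium this common value is $\min_\indplayer\risk_\indgroup(\theta^\eq_\indplayer)$.

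For the remaining assertions: once $\theta^\eq_\indplayer=\phi_\indgroup$ for $\indplayer\in\mathcal J$, substituting any reallocation of \group{} $\indgroup$ within $\mathcal J$ (other \groups{} fixed at $\partic^\eq$) into the learners' first-order conditions shows $\phi_\indgroup$ is still optimal for each $\indplayer\in\mathcal J$ and the other $\theta^\eq_\indplayer$ are untouched; combined with $\risk_\indgroup(\phi_\indgroup)=\min_\indplayer\risk_\indgroup(\theta^\eq_\indplayer)$ and the other \groups{}' allocations remaining risk-minimizing, $(\partic',\alltheta^\eq)$ is again an equilibrium with the same $\risk^{\total}$, hence balanced. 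Finally, any stable equilibrium either places every \group{} on a single \player{} (so $\partic\in\{0,1\}$, a split market) or has some \group{} with support of size $\ge 2$, which by the above must be balanced. The step I expect to be most delicate is the concavity argument of the second paragraph --- in particular the fact that local minimality of $g$ propagates to constancy only \emph{within} the minimal face $F$ (one cannot extend segments through $\partic^\eq$ in directions leaving $F$) --- together with the minor bookkeeping needed to set aside \players{} that carry zero mass, for which $\alltheta^\star$ and the dynamics are defined only up to irrelevant components.
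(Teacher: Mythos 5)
Your proof is correct, and while it shares the paper's overall skeleton---invoke Theorem~\ref{prop:eq_min_connection} to reduce stability to local minimality of the total risk, pass to the concave partially-minimized function $g(\partic)=\min_\alltheta\risk^\total(\partic,\alltheta)$ (the paper's $F$ from Lemma~\ref{lem:local_min_char}), and observe that a concave function with a local minimum in the relative interior of a face must be constant near that point---the way you extract the balance conditions from that constancy is genuinely different. The paper characterizes ``$F$ constant on the face'' by computing first and second partial derivatives of $F$ explicitly (Lemma~\ref{lem:partial_deriv}, which requires the implicit function theorem to differentiate $\theta_j^\star(\partic)$), reading risk equivalence off the first-order perpendicularity condition and risk optimality off the vanishing of a quadratic form $\nabla\risk_i(\theta^\eq_j)^\top H\,\nabla\risk_i(\theta^\eq_j)$ with $H\succ 0$. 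You instead sandwich $g$ under the affine majorant $L(\partic)=\risk^\total(\partic,\alltheta^\eq)$, note that an affine function minimized at a relative-interior point is constant, conclude $\alltheta^\star(\cdot)\equiv\alltheta^\eq$ on a neighborhood in the face, and then read $\nabla\risk_\indgroup(\theta^\eq_\indplayer)=0$ directly off the difference of the learners' first-order conditions at two nearby allocations. This avoids second derivatives and the implicit function theorem entirely, at the cost of obtaining risk equivalence only as a corollary of optimality (via strong convexity forcing $\theta^\eq_\indplayer=\phi_\indgroup$ for all $\indplayer\in\mathcal J$) rather than as an independent first-order condition; since the two conditions are anyway jointly equivalent to $\theta^\eq_\indplayer=\phi_\indgroup$, nothing is lost. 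Your handling of the remaining clauses (other allocations over $\mathcal J$ stay balanced because the $\indgroup$-th term drops out of every learner's first-order condition; non-split-market stable equilibria must be of this form) matches the paper's conclusions, and the caveats you flag---restricting segments to the minimal face, and the irrelevance of zero-mass learners---are exactly the right places to be careful and are handled correctly.
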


This result characterizes a set of possibly stable equilibria.
It demonstrates that risk \emph{optimality}, in addition to \emph{equivalence}, is necessary.
Guaranteeing the stability of such balanced equilibria requires further information about the dynamics, and  it is not possible to make a general statement.
Examples in Appendix~\ref{sec:stability ex} demonstrate that such balanced equilibria may be asymptotically stable, stable, or unstable.
Furthermore, the balance condition is fragile in the sense that it would not hold under small perturbations to the underlying risk functions.
While the number of possible balanced equilibria is combinatorial in the number of \players{} and \groups{}, risk functions are continuous,
so it is possible to find arbitrarily small perturbations to any the risk functions that would destabilize all balanced equilibria. %

\begin{proof}[Proof Sketch of Theorems~\ref{thm:segmented_market_eq} and~\ref{thm:balanced_eq}]
By Theorem~\ref{prop:eq_min_connection}, characterizing the stable equilibria is equivalent to characterizing isolated and non-isolated local minima of the total risk.
We show that it suffices to characterize local minima of the partial minimization $F(\alpha) = \min_\alltheta \risk^\total(\alpha,\alltheta)$ over the simplex product $\Delta_m^n = \Delta_\numplayer\times\dots\times\Delta_\numplayer$.
Since $F$ is concave, all minima occur on the boundary, i.e. a face or a vertex.
Since $F$ is still concave when restricted to a face of the simplex, the same argument shows the minima are on the boundary, hence vertices, except for the degenerate case where $F$ takes a constant value over the face.

Thus, the isolated local minima occur at vertices of the simplex product, which correspond to segmented allocation.
Further analysis of $F$ yields the conditions presented in Theorem~\ref{thm:segmented_market_eq}.
The local minima in the degenerate case are characterized by the balanced equilibria conditions in Theorem~\ref{thm:balanced_eq}.
\end{proof}

\begin{figure*}[h]
\centering
\begin{subfigure}{0.55\linewidth}
        \vskip 0pt
        \hspace*{-0.5cm}    
        \centering
        \includegraphics[height = 3.7cm]{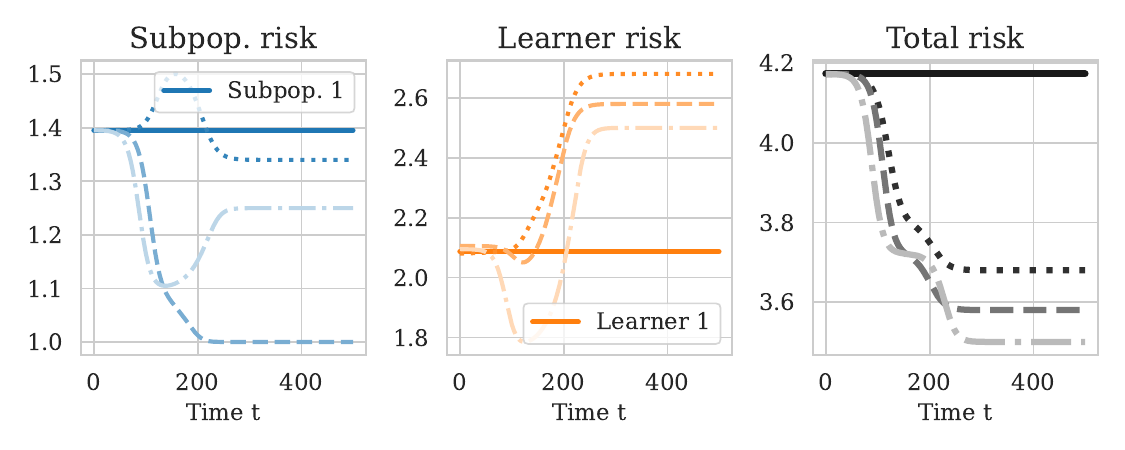}
    \caption{{Risk dynamics}}
    \label{fig:cost_curves}
\end{subfigure}
~
\begin{subfigure}{0.42\linewidth}
        \vskip 0pt
        \centering
    \includegraphics[height = 3.7cm]{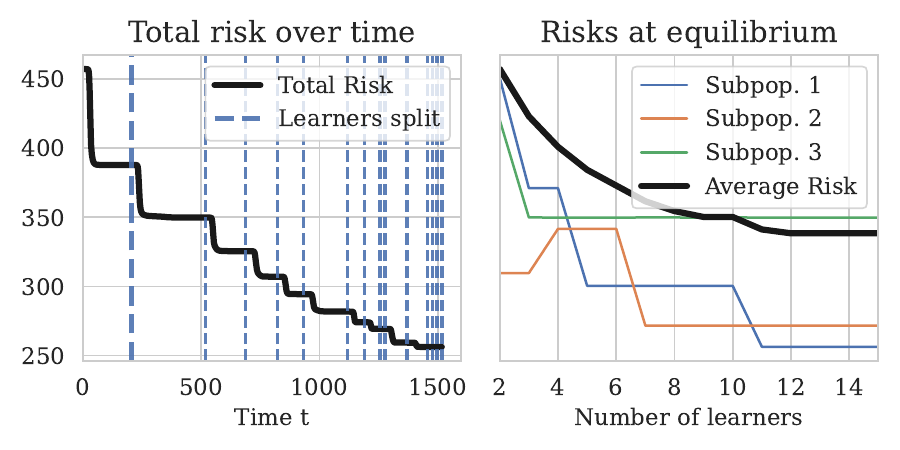}
    \caption{{Impact of competition on social welfare}
    }
    \label{fig:competition}
\end{subfigure}
\caption{{\textbf{Synthetic settings:} 
    Figure (a) illustrates a setting with 3 \groups{} and 2 \players{}. The dsolid lines correspond to the %
    risk trajectory for the unstable balanced equilibrium at initialization. Dotted and dashed lines illustrate risk trajectories under three different slight perturbations from the initialization.
    In Figure (b), the left plot illustrates the reduction in total risk over time. The dashed blue lines indicate when a new \player{} joins. The right plot shows the equilibrium-risk for a subset of the \groups{} as the number of \players{} increases.}
    }
    \label{fig:cost_curves_competition}
\end{figure*}

\subsection{Social Welfare for Segmented Populations}
\begin{definition} %
The \emph{social welfare} of a state $(\partic,\alltheta)$ \propchange{is strictly {decreasing} } in the total risk  $\risk^{\total}(\alpha, \alltheta)$.
\end{definition}
This definition of social welfare is utilitarian in the sense that
it depends on the cumulative quality of individuals' experiences.
Maximizing the social welfare corresponds to minimizing the total risk, which can be posed as the following optimization problem
\begin{align}\begin{split}\label{eq:TR_opt}
    (\partic^\star,\alltheta^\star) \in &\arg\min_{\partic,\alltheta}
    \quad \risk^{\total}(\alpha, \alltheta) 
    \\
    &\text{s.t.} \quad  \alpha_{\indgroup,:} \in\Delta_{\numplayer}~~\forall~~\indgroup=\{1,\dots \numgroup\}\:.\end{split}
\end{align}
Here, $(\partic^\star,\alltheta^\star)$ is the social welfare maximizer.

Our discussion of stable equilibria has so far focused on only local minimizers of the total risk.
In fact, global minimization of this objective (and therefore maximization of social welfare) is a hard problem.
The total risk objective can be viewed as an instance of the $k$-means clustering problem with $k=\numplayer$. 
In the language of this literature (e.g.,  \cite{selim1984k}), each \group{} is a data point %
and the parameter selected by each \player{} is a cluster center.
The allocations described by $\partic$ correspond to (fuzzy) cluster assignment and each risk function $\risk_\indgroup(\theta_\indplayer)$ corresponds to a measure of ``dissimilarity'' between %
data points (\groups{}) and cluster centers (\players{}).

The connection to $k$-means clustering elucidates the difficulty of minimizing the total risk.
The ``minimum sum-of-squares clustering'' problem (i.e., squared Euclidean norm dissimilarity) is NP hard with general dimension even when $k=2$~\citep{aloise2009np}.
When the number of clusters and dimension are fixed,
\citet{inaba1994applications} present an algorithm for solving the minimum sum-of-squares clustering problem which is polynomial in the number of datapoints. Translated to our setting, its complexity is $O(\numgroup^{\numplayer d})$.
It is therefore unrealistic to hope that a myopic dynamic might generally lead to social welfare maximization. 
However, due to the connections with total risk, risk reducing dynamics are at least well-behaved with regards to social welfare. 

\begin{prop}\label{prop:socialwelf}
For risk reducing \groups{} and \players{}, 
social welfare is non-decreasing over time.
If the dynamics are furthermore risk
minimizing in the limit, social welfare is strictly increasing and stable equilibria correspond to local social welfare maxima.
\end{prop}
Local maximization is not a panacea: Example~\ref{ex:local-global} shows  a local maximum of the social welfare can be much worse than the global one.

\begin{example}[Arbitrarily high total risk at local optimum]\label{ex:local-global}
Consider three \groups{} with
\[\risk_1(\theta) = \theta^2,\quad \risk_2(\theta) = (\theta-1)^2,\quad \risk_2(\theta) = (\theta-\phi)^2\]
for some $\phi>2$.
Suppose that \group{} sizes are $\beta_1=\beta_2=\beta$ and $\beta_3=1-2\beta$ for some $0<\beta<1/2$.
Further suppose that there are two \players{}. Up to permutation, the social welfare optimum is $\theta_1=1/2$ and $\theta_2=\phi$, with total risk $\beta/2$.
However, as long as $\phi<\frac{1-\beta}{1-2\beta}$, there is another stable equilibrium. Let $\phi=\frac{1-\beta}{1-2\beta} - \epsilon$. Then the following is a stable equilibrium: $\theta_1=0$ and $\theta_2=1-\epsilon$. The total risk is $\beta+\frac{(\beta-\epsilon)^2}{1-2\beta}$. For $\beta$ close to $1/2$, this risk can be arbitrarily larger than the social optimum.
\end{example}

In this example, a large gap between a stable local optimum and the global optimum arises in part due to a large difference in \groups{}' sizes.
We further remark that minority groups can be under-served particularly when considering worst-case risk over \groups{}~\citep{hashimoto2018fairness}.
Even at a social welfare maximizer $(\partic^\star,\alltheta^\star)$, the worst-case \group{} risk can be arbitrarily bad.
It is straightforward to construct such examples even in the single \player{} case:
consider a minority group with vanishingly small population proportion and arbitrarily high risk at the optimal parameter for the majority group (Example~\ref{ex:minority-worst-case}).

Despite these inherent difficulties, we find that the situation improves as the number of \players{} increases. %
It is straightforward to see that the maximal social welfare will increase:
any %
point which is optimal for $\numplayer$ \players{} can be trivially transformed into a feasible point for $\numplayer+1$ \players{} which achieves the same social welfare, by allocating no \groups{} to the new \player{}.
There is more nuance involved when considering any possible stable equilibria.
Instead, we make a statement about a particular \player{} growth process which corresponds to existing \player{} $\numplayer$ ``splitting in half''.
\begin{prop}\label{prop:add_player}
Suppose that \propchange{risks are strongly convex}, there are $\numplayer$ \players{}, $(\partic^{\eq},\alltheta^{\eq})$ is an equilibrium, and at least one \group{} $\indgroup$ allocated to \player{} $\numplayer$ does not have optimal \group{} risk, so $ \nabla \risk_\indgroup(\theta_\numplayer^\eq)\neq0$.
The state is amended to add an additional \player{}: $\tilde\alltheta^\eq = [\alltheta^\eq, \theta^\eq_\numplayer]$ and
\[\tilde\partic_{:,j}^\eq =\begin{cases} \partic^\eq_{:,j}&j\leq  \numplayer\\
\frac{1}{2}\partic^\eq_{:,\numplayer} &  j\in\{\numplayer,\numplayer+1\}
\end{cases}\]
Under dynamics which are risk minimizing in the limit, the equilibrium $(\tilde\partic^\eq, \tilde\alltheta^\eq)$ is not  stable, so a small perturbation will send the system to a state with strictly lower total risk (higher social welfare). %
\end{prop}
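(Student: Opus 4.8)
The plan is to invoke Theorem~\ref{prop:eq_min_connection} and Lemma~\ref{lem:local_min_char} to reduce the statement to a claim about the concave function $F(\partic)=\min_{\alltheta}\risk^{\total}(\partic,\alltheta)$, and then exhibit an explicit perturbation of the duplicated allocation along which $F$ strictly decreases. First I would note that $(\tilde\partic^\eq,\tilde\alltheta^\eq)$ is genuinely an equilibrium and, crucially, that $\tilde\alltheta^\eq=\arg\min_{\alltheta}\risk^{\total}(\tilde\partic^\eq,\alltheta)$: the two copies (\players{} $\numplayer$ and $\numplayer+1$) observe exactly the mixture that the original \player{} $\numplayer$ observed, since the relative weights $\partic^\eq_{\indgroup\numplayer}\groupsize_\indgroup$ are all halved, so $\theta^\eq_{\numplayer}$ still minimizes each of their observed risks, and each \group{} still places mass only on \players{} that are risk-optimal for it. By Lemma~\ref{lem:local_min_char}, $(\tilde\partic^\eq,\tilde\alltheta^\eq)$ is a local minimizer of $\risk^{\total}$ if and only if $\tilde\partic^\eq$ is a local minimizer of $F$; so it suffices to show the latter fails, after which instability follows from Theorem~\ref{prop:eq_min_connection}.

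Let $\indgroup$ be the \group{} with $\nabla\risk_\indgroup(\theta^\eq_{\numplayer})\neq 0$; then $\partic^\eq_{\indgroup\numplayer}>0$, so $\tilde\partic^\eq_{\indgroup\numplayer}=\tilde\partic^\eq_{\indgroup,\numplayer+1}=\tfrac12\partic^\eq_{\indgroup\numplayer}\in(0,\tfrac12]$, and in particular $\tilde\partic^\eq$ lies in the relative interior of a segment of the allocation domain (it is not an extreme point). I would perturb by transferring mass of \group{} $\indgroup$ between the two copies: for small $t$ set $\partic(t)=\tilde\partic^\eq+tv$, where $v$ increments the $(\indgroup,\numplayer)$ entry by $1$ and the $(\indgroup,\numplayer+1)$ entry by $-1$; this is feasible for $|t|$ small and keeps $\partic(t)$ in the product of simplices. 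Writing $w_k=\tfrac12\groupsize_k\partic^\eq_{k\numplayer}$ and $\theta^\star:=\theta^\eq_{\numplayer}$ (which minimizes $\sum_k w_k\risk_k$, because $(\partic^\eq,\alltheta^\eq)$ is an equilibrium under risk-minimizing-in-the-limit \player{} dynamics), we get $F(\partic(t))=\mathrm{const}+h_{\numplayer}(t)+h_{\numplayer+1}(t)$ with
\[h_{\numplayer}(t)=\min_{\theta}\Big[\textstyle\sum_k w_k\risk_k(\theta)+t\groupsize_\indgroup\risk_\indgroup(\theta)\Big],\qquad h_{\numplayer+1}(t)=\min_{\theta}\Big[\textstyle\sum_k w_k\risk_k(\theta)-t\groupsize_\indgroup\risk_\indgroup(\theta)\Big],\]
each a concave function of $t$, both minimized at $\theta^\star$ when $t=0$.

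The key step is the second-order analysis. By the envelope theorem the first derivatives at $t=0$ are $h_{\numplayer}'(0)=\groupsize_\indgroup\risk_\indgroup(\theta^\star)=-h_{\numplayer+1}'(0)$, so $\tfrac{d}{dt}F(\partic(t))\big|_{0}=0$, consistent with the two copies having identical parameters. Because each $\sum_k w_k\risk_k+t\groupsize_\indgroup\risk_\indgroup$ is strongly convex for small $t$, the inner minimizers are smooth in $t$, and the implicit function theorem applied to their first-order conditions gives derivatives $-\groupsize_\indgroup H^{-1}\nabla\risk_\indgroup(\theta^\star)$ and $+\groupsize_\indgroup H^{-1}\nabla\risk_\indgroup(\theta^\star)$ at $t=0$, where $H:=\sum_k w_k\nabla^2\risk_k(\theta^\star)\succ0$. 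Differentiating the first-derivative expressions once more then yields
\[h_{\numplayer}''(0)=h_{\numplayer+1}''(0)=-\groupsize_\indgroup^2\,\nabla\risk_\indgroup(\theta^\star)^{\top}H^{-1}\nabla\risk_\indgroup(\theta^\star)<0,\]
and strict negativity is exactly where the hypothesis $\nabla\risk_\indgroup(\theta^\eq_{\numplayer})\neq0$ enters. Hence $F(\partic(t))<F(\tilde\partic^\eq)$ for every small $t\neq0$ (the quadratic Taylor coefficient is strictly negative), so $\tilde\partic^\eq$ is not a local minimizer of $F$. (Equivalently, one may avoid computing second derivatives: a concave $F$ cannot attain a local minimum at the non-extreme point $\tilde\partic^\eq$ unless it is affine along a segment through it, and the implicit-function-theorem computation shows $F$ is strictly concave along $v$.)

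By Lemma~\ref{lem:local_min_char}, $(\tilde\partic^\eq,\tilde\alltheta^\eq)$ is then not a local minimizer of $\risk^{\total}$, so by Theorem~\ref{prop:eq_min_connection} it is not stable; concretely, the perturbed allocation with re-optimized parameters already has total risk $F(\partic(t))<\risk^{\total}(\tilde\partic^\eq,\tilde\alltheta^\eq)$, and by Proposition~\ref{prop:total_risk_dec} the total risk only decreases further along the subsequent risk-reducing dynamics. I expect the main obstacle to be making the second-order argument rigorous: justifying differentiability of the ``minimize-out'' map via the implicit function theorem and strong convexity, and confirming that $\nabla\risk_\indgroup(\theta^\eq_{\numplayer})\neq0$ forces the relevant quadratic form to be strictly negative. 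The remaining bookkeeping—feasibility and non-extremality of the perturbation, and checking that $(\tilde\partic^\eq,\tilde\alltheta^\eq)$ is an equilibrium with $\tilde\alltheta^\eq$ minimizing $\risk^{\total}(\tilde\partic^\eq,\cdot)$—is routine.
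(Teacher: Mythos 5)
Your argument is correct, but it takes a more computational route than the paper. The paper's proof is essentially two lines: by construction $(\tilde\partic^\eq,\tilde\alltheta^\eq)$ is not a split market, since \group{} $\indgroup$ has nonzero support on the two copies of \player{} $\numplayer$, and the hypothesis $\nabla\risk_\indgroup(\theta^\eq_\numplayer)\neq 0$ means the ``risk optimal'' half of the balanced condition in Theorem~\ref{thm:balanced_eq} fails; that theorem then directly rules out stability, and Theorem~\ref{prop:eq_min_connection} converts this into the existence of risk-decreasing directions. What you do instead is re-derive from scratch, specialized to the mass-transfer direction between the two copies, exactly the computation that underlies Theorem~\ref{thm:balanced_eq}: your envelope/implicit-function-theorem calculation giving the second derivative $-\groupsize_\indgroup^2\,\nabla\risk_\indgroup(\theta^\star)^\top H^{-1}\nabla\risk_\indgroup(\theta^\star)<0$ of $F$ along that direction is the same quantity that appears in Lemma~\ref{lem:partial_deriv} and in the ``risk optimal'' condition of the balanced-equilibrium characterization. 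So the two proofs rest on identical mathematics; yours is self-contained and makes the destabilizing direction and its quantitative strength explicit (which the paper's citation-based proof does not), at the cost of redoing work the paper has already packaged into Theorem~\ref{thm:balanced_eq}. The supporting steps you flag as routine do check out: the perturbation is feasible because $\tilde\partic^\eq_{\indgroup\numplayer}=\tilde\partic^\eq_{\indgroup,\numplayer+1}\in(0,\tfrac12]$, and $\tilde\alltheta^\eq$ minimizes $\risk^\total(\tilde\partic^\eq,\cdot)$ because an equilibrium under risk-minimizing-in-the-limit \player{} dynamics forces each $\theta^\eq_\indplayer$ to minimize its own observed mixture risk, and the two copies observe the same mixture as the original \player{} $\numplayer$.
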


\ifsub
    \section{{SIMULATIONS}}\label{sec:experiments}
\else
    \section{{Simulations}}\label{sec:experiments}
\fi

We illustrate the salient properties of the decision dynamics in simulation\footnote{Implementation details and reproduction instructions at:\\ \url{https://github.com/mcurmei627/MultiLearnerRiskReduction}}.
We consider both a synthetic setting as well as one instantiated from a prediction task on census data.

\textbf{Synthetic}\quad
In Figure \ref{fig:cost_curves} we consider a simple scenario with $n=3$ \groups{} of equal sizes $\beta_{\indgroup} = 1/3$, quadratic risk functions $\risk_{\indgroup} = \Vert \phi_\indgroup - \theta\Vert^2 + 1$ with distinct risk minimizing decisions $\phi_{\indgroup}$ and $m=2$ \players{}. 
The \players{} minimize their risk according to \emph{full risk minimization} (Example \ref{ex:full_player}) and the \groups{} update their participation via multiplicative weights update (Section~\ref{sec:dynamics}). When $\partic^{0}_{\indgroup, \indplayer} = 1/2$ for all $\indgroup, \indplayer$ 
the risk equality condition from Theorem \ref{thm:balanced_eq} is satisfied 
with $\theta^{\eq}_{\indplayer} = (\phi_1 + \phi_2 + \phi_3)/3$, however the optimality condition is not. We therefore observe that this equilibrium is not stable, and slightly perturbing the initial conditions leads to split-market equilibria. Figure \ref{fig:cost_curves} illustrates trajectories from three different perturbations. It demonstrates that the total risk is non-increasing whereas the average risks for both \players{} and \groups{} are not monotonic. Each of the perturbations has different risk trajectories and equilibrates at a different split-market equilibrium. We repeat these experiments with noisy dynamics, we consider both exogenous noise that independently perturbs the decisions of the learners and/or populations as well as intrinsic noise due to making updates with finite sample estimates rather than at population level. We find that the key properties of the dynamics hold when the updates are noisy, detailed experiments are presented in Appendix \ref{app:noisy_dynamics}.

Another set of experiments in Figure \ref{fig:competition} illustrates how a larger number of \players{} lead to better outcomes in terms of total risk. We consider a set of $\numplayer{} = 2$ \players{} and $\numgroup{} = 50$ \groups{}. We simulate the dynamics until the market has reached equilibrium, at which point a randomly chosen \player{} breaks up into two identical \players{} with half the user base. 
From this unstable equilibrium (Proposition \ref{prop:add_player}) we slightly perturb the parameters of the two \players{} and allow the system to reach a new equilibrium state.
The procedure repeats until the number of \players{} reaches number of \groups{}. These simulations illustrate that more competition improves social welfare, however the improvements are not uniform for all \groups{} with some groups seeing their risk at equilibrium increase with the addition of new \players{}.

\begin{figure}[h]
    \centering
    \begin{subfigure}[b]{0.56\textwidth}
        \centering
        \includegraphics[width=\textwidth]{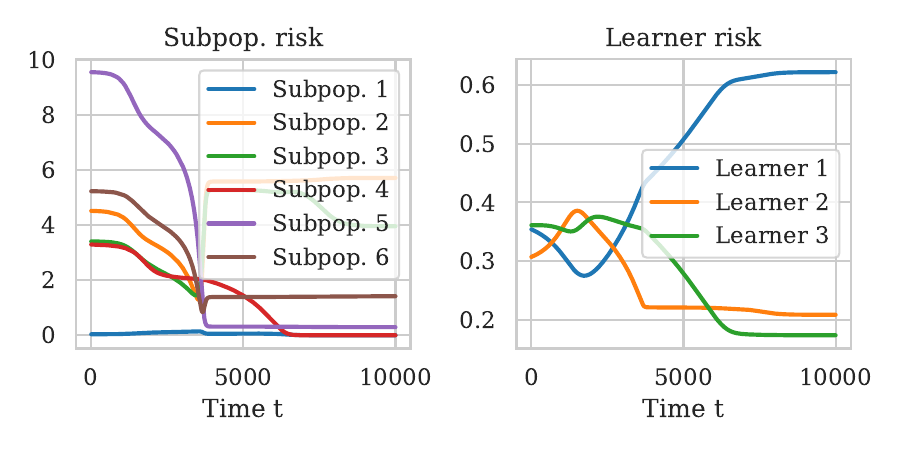}
        \caption{Risk dynamics}
        \label{fig:census_risk}
    \end{subfigure}%
    ~ %
    \begin{subfigure}[b]{0.4\textwidth}
        \centering
        \includegraphics[width=\textwidth]{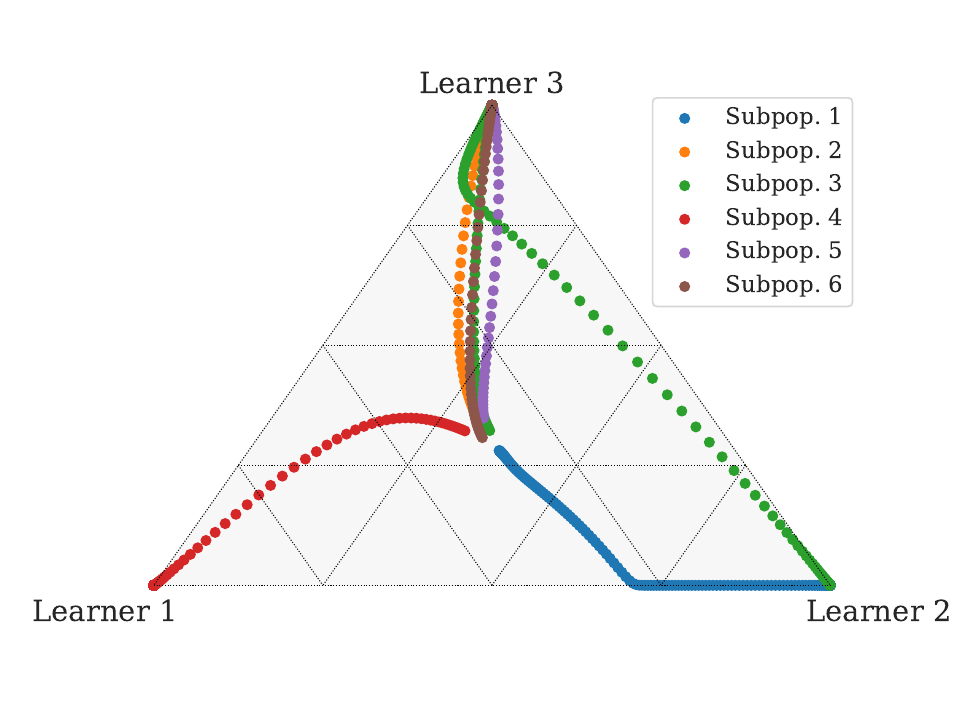}
        \caption{Allocation trajectories}
        \label{fig:census_ternary}
    \end{subfigure}
    \caption{{\textbf{Empirical \groups{} from Census data:} 
    Figure (a) displays the relative risk with respect to the best achievable risk for the \group{} over time. 
    Figure (b) illustrates how allocations initialized near $(1/3,1/3, 1/3)$ converge to a split market equilibrium.}}
    \label{fig:census}
\end{figure}

\textbf{Census data} \quad We consider a semi-synthetic setting where \groups{} and their risk functions are instantiated by a prediction task on real data. Using \texttt{folktables} \citep{ding2021retiring} we consider  a modified version of \emph{ACSTravelTime} prediction problem derived from the 2018 California Census data. We consider 6 \groups{} corresponding to racial groups with relative size ranging from 1.2\% to 61\%. We define the least-squares risk functions as $\risk_{\indgroup}(\theta) = \frac{1}{N_\indgroup}\Vert X_\indgroup\theta - y_\indgroup\Vert^2$ where $X_\indgroup\in\R^{N_i\times d}$ are the features (containing demographics, educational attainment, income levels, and modes of transportation) and $y_\indgroup\in\R^{N_i}$ are the labels (log transform of the daily commute time in minutes) for individuals within \group{} $\indgroup$.
We simulate risk reducing dynamics from a perturbed balanced equilibrium over 3 \players{}. As in the synthetic example, the risks of \players{} and \groups{} are not all monotone (Figure \ref{fig:census_risk}), but the total risk function is. Finally Figure \ref{fig:census_ternary} illustrates the convergence of allocation dynamics to a segmented equilibrium.

\ifsub
   \section{DISCUSSION}
\label{sec:conclusion}
\label{sec:relatedwork}
\else
   \section{Discussion}
\label{sec:conclusion}
\fi

In this paper, we study the feedback dynamics of user retention for loss 
minimizing learners, where \groups{} choose between  providers.
We introduce a formal notion of \emph{risk reducing} and \emph{minimzing} to capture this feedback, and 
show that there is a close connection between such dynamics and the \emph{total risk} summed over \groups{} and \players{}.
We provide a comprehensive characterization of stable equilibria and investigate the  implications in terms of a utilitarian social welfare.
This work relates to questions of fairness and minority representation in several ways.
First, our results imply that risk-minimizing dynamics 
in multi-learner settings can result in higher welfare for small \groups{}
 compared with single-learner settings, as studied by~\citep{hashimoto2018fairness,zhang2019group}. This resonates with recent work showing that monopolies have higher  \emph{performative power} and lead to lower individual utility~\citep{hardt2022performative}. 
 
The dynamics that we study often lead to \emph{segmentation} of \groups{} across \players{} as an emergent phenomenon\footnote{This connects to economic literature on ``rational" discrimination, where competitors have no inherent preference to discriminate and yet equilibria are segregated, e.g.~\citet{foster1992economic}}. 
This segmentation can lead to pointwise lower risks for \groups{}, especially when \groups{} have considerably different risk profiles.
In some contexts, the benefits of the reduced risk among \groups{} may outweigh possible harms from segregation. In others, where proportional representation of groups across \players{}, models, or clusters~\citep{kleindessner2019fair,kleindessner2019guarantees} is important, our work implies that independent risk minimization can lead to undesirable outcomes. In short, this work analyzes natural dynamics with consequences for the distribution of \groups{} amongst independent \players{}; whether or not the consequences are desirable depend on the specific application considered.

We highlight several directions for future work.
Our results lay the groundwork for an investigation of the stochastic dynamics that occur for finite sample approximations to the risk or participation driven by decisions of individuals.
Such behaviors are risk reducing in \emph{expectation}, so we expect the noisy trajectories to converge with high probability to sets around the asymptotically stable equilbria we characterize. 
There are many interesting and relevant questions in the finite sample setting: 
What is the effect of sample size on the ability of new \players{} to enter a market and minority \groups{} to be adequately represented?
Can we model heterogeneous \players{} who differ in which features they measure and with how much noise?
Are there trade-offs between the expressivity of models and the practical difficulty of minimizing risk from finite samples in high dimensions?

It would also be interesting to consider extensions or alternative dynamics models for the \player{} and \group{} decisions.
One could investigate competitive \players{} who explicitly strategize to capture \groups{} \citep{ben2019regression,aridor2020competing}; this setting is related to facility location and Hotelling games \citep{owen1998strategic,hotelling1929stability}.
One might imagine that \groups{} do not act uniformly and may not even be entirely independent of each other---the participation update may depend on some underlying social network.
The connections between total risk reduction and
$k$-means clustering algorithms
suggest interventions such as \group aware initialization \citep{bose2023initializing} that could improve social welfare. 
Results on ``ground truth recovery'' 
may yield insight into particular population structures that lead to simpler dynamics or restricted sets of equilibria.

\pagebreak 

\section*{Acknowledgements}
We thank Laurent Lessard for suggesting the clever argument to prove Lemma~\ref{lem:chull}.
JM is supported by an NSF CAREER award (ID 2045402), an NSF AI Center Award (The Institute for Foundations of Machine Learning), and the Simons Collaboration on the Theory of Algorithmic Fairness. LJR is supported by  NSF CNS-1844729, NSF IIS-1907907, and Office of Naval Research YIP Award N000142012571. 
MF is supported by NSF TRIPODS II-DMS 2023166, NSF TRIPODS-CCF 1740551, and NSF CCF 2007036.

\bibliographystyle{plainnat}
\bibliography{refs.bib}

\newpage
\appendix

\section{Motivating Examples}\label{app:motivation}

We discuss several real-world examples which exhibit
degrees of market segmentation across characteristics such as nationality, age, and race. 
In these examples, market conditions are certainly affected by more complex phenomena, from network effects to explicit competition between firms.
While we do not claim that the dynamics we study are necessarily the main contributing factor, 
our simple model isolates the potential contribution of 
learning dynamics: namely, to reinforce such segmentation.
This perspective highlights the potential effects of efforts to incorporate data or improve personalization.

\subsection{Social Media}
\begin{figure}[h]
    \centering
    \includegraphics[width=0.49\textwidth]{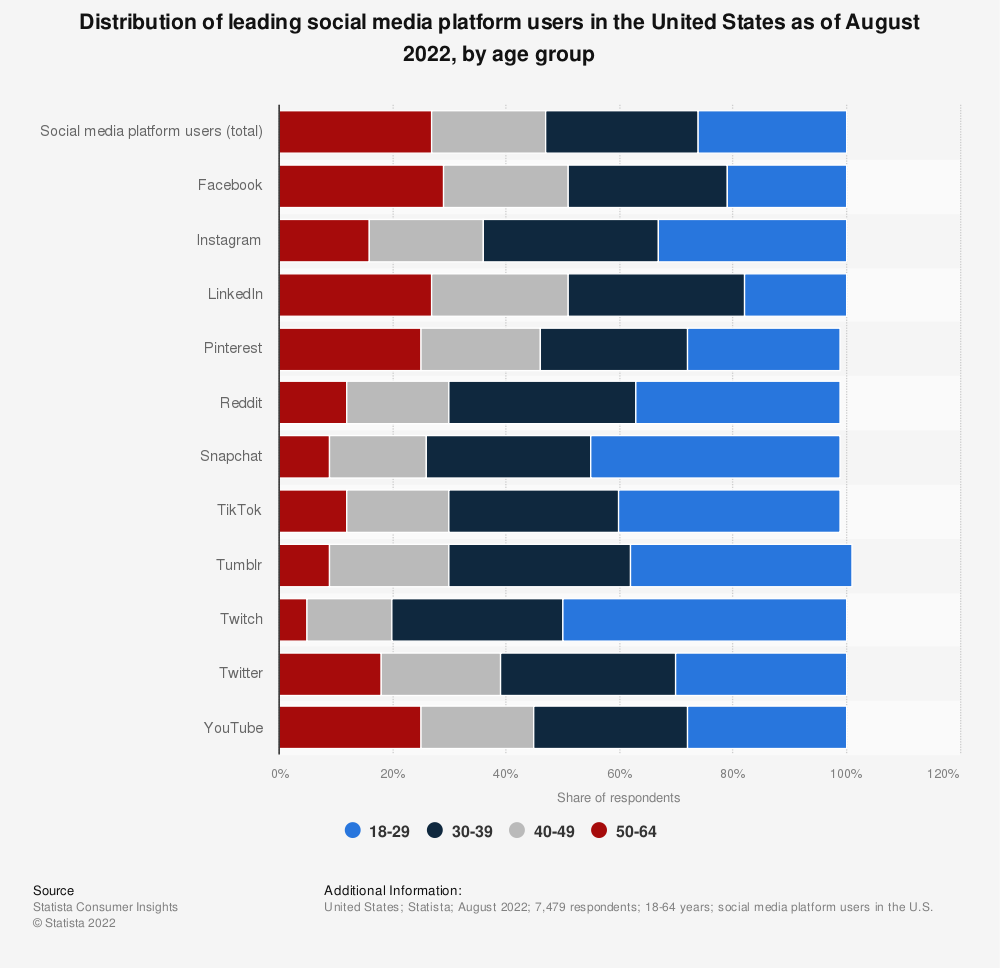}
    \includegraphics[width=0.49\textwidth]{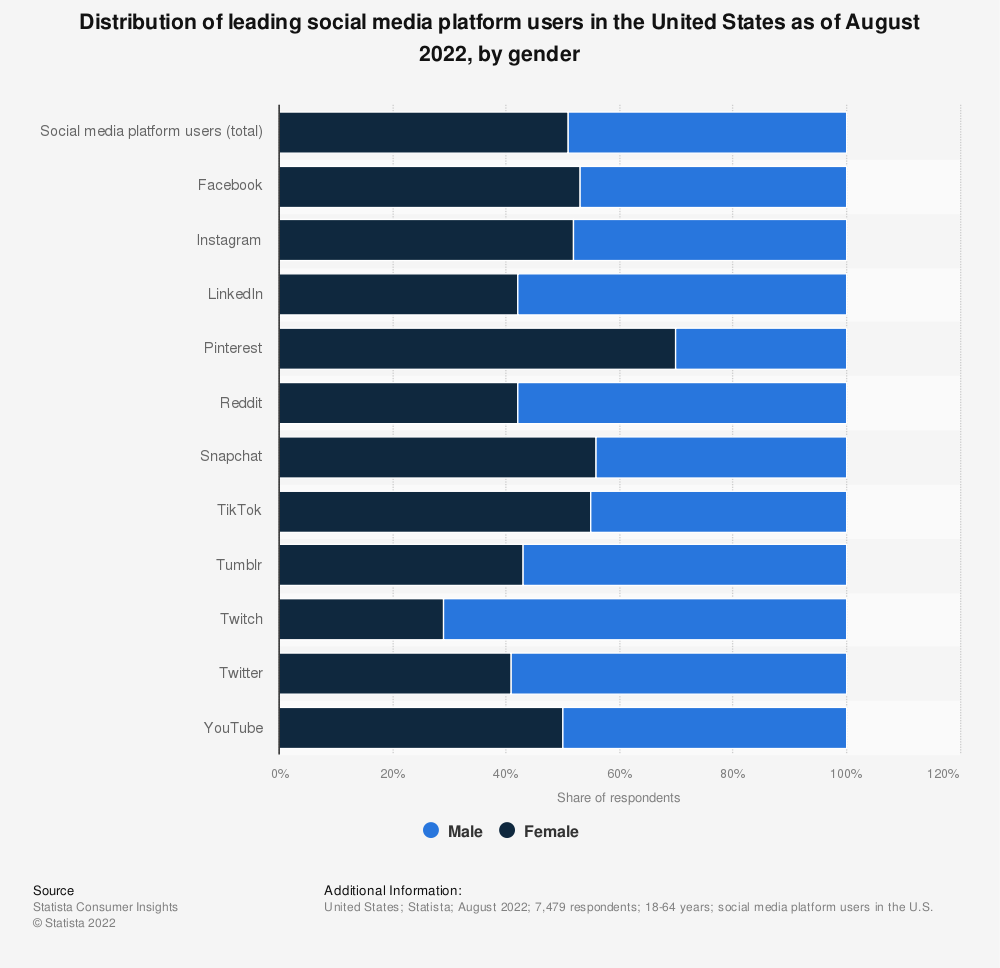}
    \caption{Social media usage across leading social media platforms. Left: Age distribution. Right: Gender distribution}
    \label{fig:social_media}
\end{figure}
Usage of various social media sites in the US varies across genders\footnote{\url{https://www.statista.com/statistics/1337563/us-distribution-leading-social-media-platforms-by-gender/}} and age groups\footnote{\url{https://www.statista.com/statistics/1337525/us-distribution-leading-social-media-platforms-by-age-group/}}. For example, the users of Facebook and LinkedIn skew older while Snapchat, Tiktok, Tumblr, and Twitch are more heavily used by the younger population. Similarly users of Pinterest strongly skew female while users of Twitch are more likely to be male. Figure \ref{fig:social_media} shows the disparities along gender and age for leading social media platforms. These disparities across platforms are reinforced by user behaviors: imaging the experience of a 45 year old logging onto Twitch for the first time compared with a 14 year old; or instead imagine a 14 year old logging into Facebook. Because the usage patterns determine the data available to the platforms, the disparities are also reinforced by the behavior of the platforms themselves. Similarly, Pinterest algorithms are more likely to be tailored to the tastes of an female demographic, while Twitch’s to a demographic more representative of males.

\subsection{Music Streaming}
Worldwide market share of music streaming services is split between several companies (see Figure~\ref{fig:music-market}). However, the distribution of music streaming by country shows clear patterns: most users in China use Tencent, most users in Mexico use Spotify, and most users in the Middle East and Northern Africa (MENA) use Anghami. On the other hand, the markets United States, Russia, and India are not dominated by a single service. 
However, the handful of most used services in these regions have a small market outside of their main market.
Due to this segmented market,
only certain platforms collect large scale data about music preferences in certain regions.
If many users from western cultures make playlists containing both Arabic and Indian music,
Spotify may learn to associate those genres in a way that is undesirable or even offensive to users from those cultures.
This leads to a self-reinforcing effect: services who make bad predictions for users from certain cultures are unlikely to correct this bias as those users choose instead to use services that more accurately reflect their tastes.
\begin{figure}[h]
    \centering
    \begin{subfigure}[b]{0.32\linewidth}
    \centering
    \includegraphics[height=5.5cm]{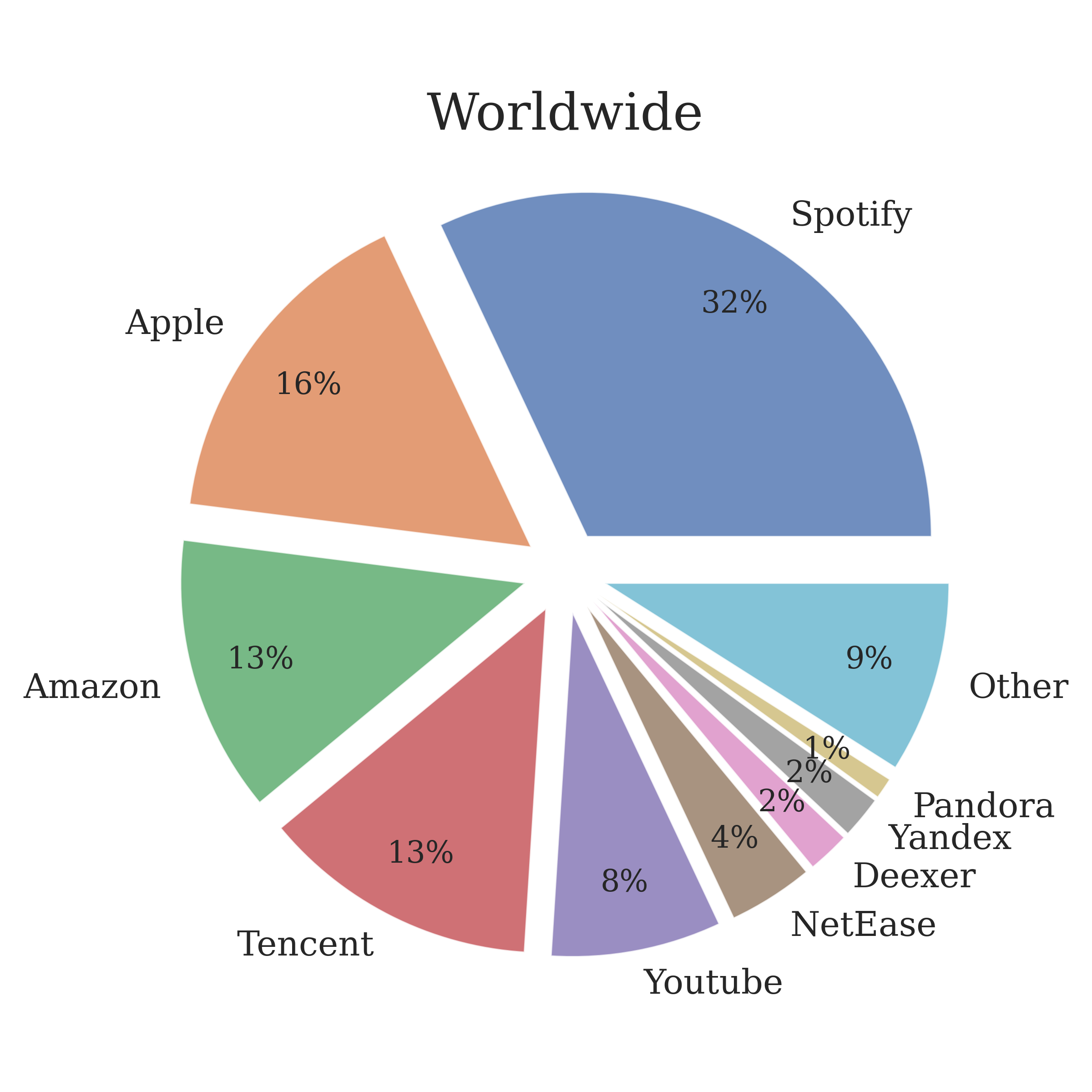}
    \caption{Worldwide usage}
    \label{fig:music_market_a}
    \end{subfigure}
    \begin{subfigure}[b]{0.64\linewidth}
    \includegraphics[height=5cm]{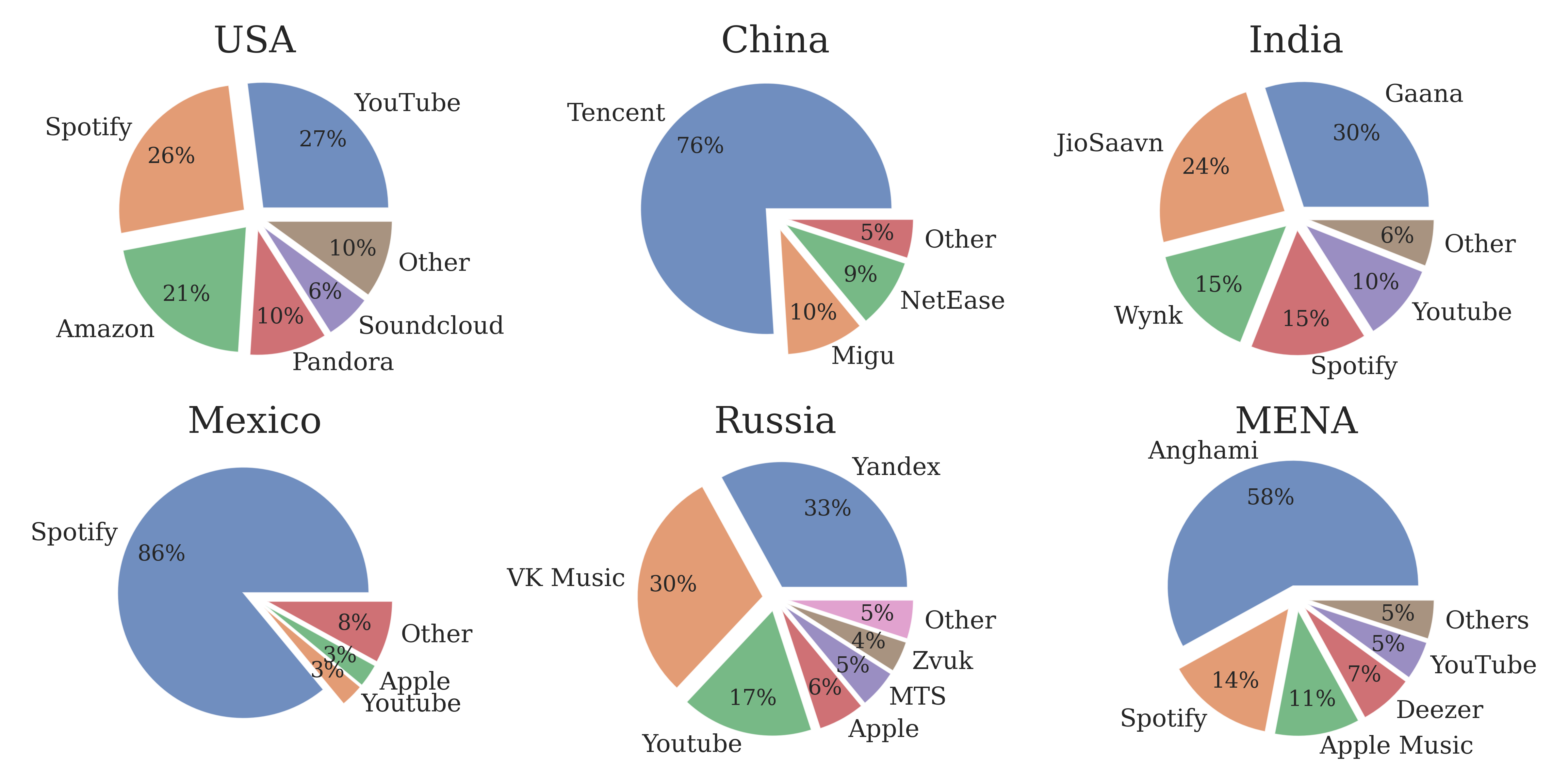}
    \caption{Usage by country}
    \label{fig:music_market_b}
    \end{subfigure}
    \caption{Usage of music streaming services in different markets\protect\footnotemark. Left: Worldwide marked share. Right: Market share in USA, China, India, Mexico, Russia, Middle East and Northern Africa (MENA).}
    \label{fig:music-market}
\end{figure}

\footnotetext{All statistics recorded from Statista: \\Worldwide:\url{https://www.statista.com/statistics/653926/music-streaming-service-subscriber-share},\\
United States:  \url{https://www.statista.com/statistics/1351506/streaming-services-music-podcasts-united-states/}, \\
China: \url{https://www.statista.com/statistics/711295/china-leading-mobile-music-platforms-by-active-user-number/}, \\
India: \url{https://www.statista.com/statistics/922400/india-music-app-market-share/},\\
Mexico: \url{https://www.statista.com/statistics/1018370/over-the-top-audio-platforms-mexico-by-market-share},\\
Russia: \url{https://www.statista.com/statistics/1347035/most-popular-music-streaming-platforms-in-russia/},\\
Middle East and North America: \\ \url{https://www.statista.com/statistics/1295716/mena-share-of-paying-music-streaming-subscribers-by-platform/}.
}

\subsection{Personalized health}
The growing popularity of direct-to-consumer genetic testing is driven by the growth of two market leaders: AncestryDNA and 23andMe\footnote{\url{https://www.statista.com/chart/17023/commercial-genetic-testing/}}. These tests are used both for determining ancestry as well as receiving polygenic risk scores for various medical conditions. The accuracy of the tests varies across ethnic groups; with Latino, Middle Eastern and, African ancestry being most under-represented. This issue is self re-inforcing; for instance people of African descent are less likely to use a large service like 23andMe and more likely to use a specialized service such as AfricanAncestry\footnote{\url{https://africanancestry.com/}}.  

\section{Preliminaries}

\subsection{Notation}

We introduce a compact notation. The simplex product is defined as
\[\Delta_\numplayer^{\numgroup} = \left\{A\in\R^{\numgroup\times\numplayer}\mid \sum_{\indplayer=1}^\numplayer A_{ij}=1\right\}\]
so that the \emph{rows} sum to 1.
Then the state space of \group{} allocations and \player{} parameters is
$\mathcal X = \Delta_\numplayer^{\numgroup}\times \R^{\numplayer\times d}$.
For a square matrix $A$, we use the notation $\diag(A)$ to represent the vector containing the diagonal entries of $A$. For a vector $a$, $\Diag(a)$ is a diagonal matrix with $a$ along the diagonal.
Furthermore we will say $a\leq b$ for vectors $a,b$ if the inequality holds elementwise.

Define a matrix valued  risk function $R:\R^{\numplayer\times d}\to \R^{\numgroup\times\numplayer}$ so that
$R_{ij}(\alltheta) = \risk_\indgroup(\theta_\indplayer)$.
Recall that in Section~\ref{sec:dynamics}, the \group{} and \player{} risks played a key role.
We therefore define vector valued functions $\bar\risk^{\groupm}:\mathcal X\to \R^\numgroup$ and $\bar\risk^{\playerm}:\mathcal X\to \R^\numplayer$ as follows: 
\[\bar\risk^{\groupm}(\partic,\alltheta) = \diag(\partic  R(\alltheta)^\top ),
\quad 
\bar \risk^{\playerm} (\alpha, \alltheta) = \diag\left(\Diag(\alpha^\top\beta)^{-1}\partic^\top \Diag(\groupsize) R(\alltheta)\right)\:.
\]
Then the definition of risk reducing dynamics for \groups{} and \players{} can be written as 
\[\bar\risk^{\groupm}(\partic^{t+1},\alltheta) \leq \bar\risk^{\groupm}(\partic^t,\alltheta) \quad \text{and}
\quad 
\bar \risk^{\playerm} (\alpha, \alltheta^{t+1}) \leq \bar \risk^{\playerm} (\alpha, \alltheta^t) \:.
\]
Risk minimizing in the limit is defined similarly, where the inequality is strict for at least one entry of the vectors unless the state is at a local minimum.

The total risk can be written as
\[\risk^{\total}(\alpha, \alltheta) := \tr( \diag(\groupsize) \partic  R(\alltheta)^\top )\:.\]

\begin{lemma}\label{lem:V_conts}
Under the assumption that all loss functions are continuous, the risk function $R$ is continuous w.r.t. to $\alltheta$, and thus $\risk^\total$ is continuous w.r.t. $\partic$ and $\alltheta$.
\end{lemma}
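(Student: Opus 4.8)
The plan is to reduce the statement to the continuity of each single-\group{} risk $\risk_\indgroup$ as a function of a single parameter vector $\theta\in\R^d$, and then to assemble the two claimed continuity statements from elementary closure properties. First I would observe that the matrix-valued risk $R$ has entries $R_{\indgroup\indplayer}(\alltheta)=\risk_\indgroup(\theta_\indplayer)$, so the $(\indgroup,\indplayer)$ entry depends on $\alltheta=(\theta_1,\dots,\theta_\numplayer)$ only through the block $\theta_\indplayer$, and does so through the fixed scalar map $\risk_\indgroup$. Since each projection $\alltheta\mapsto\theta_\indplayer$ is continuous and $R$ has only finitely many entries, continuity of $R$ with respect to $\alltheta$ follows once each $\risk_\indgroup:\R^d\to\R$ is shown to be continuous.

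For the continuity of $\risk_\indgroup$ there are two routes, and I would present whichever best matches the paper's assumptions. The shortest uses a hypothesis already in force: $\risk_\indgroup$ is strongly convex, hence finite-valued and convex on all of $\R^d$, and every finite convex function on $\R^d$ is locally Lipschitz, in particular continuous. Alternatively, one argues directly from the stated hypothesis that $\loss(\cdot\,;z)$ is continuous: for $\theta^{(k)}\to\theta$ one has $\loss(\theta^{(k)};z)\to\loss(\theta;z)$ pointwise in $z$, and under an integrable envelope (or, again, using finiteness plus convexity of $\risk_\indgroup$ to get a local bound) the dominated convergence theorem gives $\risk_\indgroup(\theta^{(k)})=\E_{z\sim\distr_\indgroup}[\loss(\theta^{(k)};z)]\to\E_{z\sim\distr_\indgroup}[\loss(\theta;z)]=\risk_\indgroup(\theta)$.

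Finally, for $\risk^\total$ I would write $\risk^\total(\partic,\alltheta)=\sum_{\indgroup=1}^{\numgroup}\sum_{\indplayer=1}^{\numplayer}\groupsize_\indgroup\,\partic_{\indgroup\indplayer}\,\risk_\indgroup(\theta_\indplayer)$: each summand is the product of the constant $\groupsize_\indgroup$, the coordinate map $(\partic,\alltheta)\mapsto\partic_{\indgroup\indplayer}$ (linear, hence continuous), and $(\partic,\alltheta)\mapsto\risk_\indgroup(\theta_\indplayer)$ (a continuous projection composed with the continuous $\risk_\indgroup$ from the previous step); products and finite sums of continuous real-valued functions are continuous, so $\risk^\total$ is jointly continuous in $(\partic,\alltheta)$. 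Equivalently, in the matrix notation $\risk^\total(\partic,\alltheta)=\tr(\diag(\groupsize)\,\partic\,R(\alltheta)^\top)$ is the composition of the (clearly continuous) map $(\partic,\alltheta)\mapsto\diag(\groupsize)\,\partic\,R(\alltheta)^\top$ with the continuous linear maps $\tr$ and $\diag$. The only genuinely nontrivial point is the middle step---passing from pointwise continuity of $\loss(\cdot\,;z)$ to continuity of the expectation $\risk_\indgroup$---which needs either a domination/uniform-integrability condition or the finiteness-plus-convexity shortcut; the rest is bookkeeping with projections, products, and finite sums.
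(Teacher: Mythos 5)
The paper states this lemma without proof, so there is nothing to compare your argument against; on its own terms, your proof is correct and complete. You rightly identify the only non-bookkeeping step: pointwise continuity of $\loss(\cdot\,;z)$ does not by itself give continuity of the expectation $\risk_\indgroup(\theta)=\E_{z\sim\distr_\indgroup}[\loss(\theta;z)]$, and one needs either an integrable envelope for dominated convergence or some other device. The cleanest closure is the one you offer as the ``shortcut'': the paper assumes throughout that each $\risk_\indgroup$ is strongly convex and real-valued on all of $\R^d$, and a finite convex function on an open subset of $\R^d$ is locally Lipschitz, hence continuous --- so continuity of $R$ actually follows from the standing convexity assumption alone, without invoking the continuity of the losses at all. (This makes the lemma's stated hypothesis somewhat redundant given the paper's global assumptions, which is worth a remark but not a flaw in your argument.) The remaining steps --- entrywise dependence of $R_{\indgroup\indplayer}$ on $\alltheta$ through the continuous projection onto $\theta_\indplayer$, and joint continuity of $\risk^\total$ as a finite sum of products of continuous functions --- are exactly the routine assembly one would expect.
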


The sequential dynamics updates described in Section~\ref{sec:dynamics} can be written as 
\begin{align}\label{eq:def_f}
    \begin{bmatrix} \partic^{t+1} \\\alltheta^{t+1} 
    \end{bmatrix} &= \begin{bmatrix}\alloc(\partic^t,\alltheta^t)\\ \mu(\partic^{t+1}, \alltheta^{t})\end{bmatrix}= \begin{bmatrix}\alloc(\partic^t,\alltheta^t)\\ \mu(\alloc(\partic^t,\alltheta^t), \alltheta^{t})\end{bmatrix} =: f(\partic^{t},~\alltheta^{t})\:.
\end{align}

\begin{lemma}\label{lem:local_lipschitz}
As long as the \group{} and \player{} updates described in Section~\ref{sec:dynamics} are locally Lipschitz, so is the dynamics function $f$ defined in~\eqref{eq:def_f}.
\end{lemma}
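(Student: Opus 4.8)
The plan is to show that $f$ is a composition of locally Lipschitz maps, which is itself locally Lipschitz. Recall from~\eqref{eq:def_f} that
\[
f(\partic^t,\alltheta^t) = \begin{bmatrix}\alloc(\partic^t,\alltheta^t)\\ \mu(\alloc(\partic^t,\alltheta^t),\alltheta^t)\end{bmatrix}.
\]
The first block is just the map $\alloc$, assumed locally Lipschitz by hypothesis. The second block is $\mu$ precomposed with the map $(\partic,\alltheta)\mapsto(\alloc(\partic,\alltheta),\alltheta)$; the latter is locally Lipschitz because $\alloc$ is and the identity map on $\alltheta$ trivially is, and a map into a product space is locally Lipschitz iff each coordinate block is. Since $\mu$ is locally Lipschitz by hypothesis, the composition is locally Lipschitz. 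Finally, stacking the two locally Lipschitz blocks gives a locally Lipschitz $f$.

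The two standard facts I would invoke (and state explicitly, since they are elementary) are: (i) if $g_1:\mathcal U\to\mathcal V_1$ and $g_2:\mathcal U\to\mathcal V_2$ are locally Lipschitz, then $(g_1,g_2):\mathcal U\to\mathcal V_1\times\mathcal V_2$ is locally Lipschitz (with, say, the $\ell_2$ product metric, the Lipschitz constant on a neighborhood is at most $\sqrt{L_1^2+L_2^2}$); and (ii) if $g:\mathcal U\to\mathcal V$ and $h:\mathcal V\to\mathcal W$ are locally Lipschitz, then $h\circ g$ is locally Lipschitz — given $u_0$, pick a neighborhood $N_1$ of $u_0$ on which $g$ is $L_g$-Lipschitz, then a neighborhood $N_2$ of $g(u_0)$ on which $h$ is $L_h$-Lipschitz, and shrink $N_1$ so that $g(N_1)\subseteq N_2$ (possible by continuity of $g$), obtaining Lipschitz constant $L_gL_h$ on the shrunken neighborhood. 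The care needed is only in the domain bookkeeping — making sure the composed map's Lipschitz neighborhood is taken small enough that the image lands in the region where the outer map's local constant applies.

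There is no real obstacle here; the only mild subtlety is that the domain of $\alloc$ and $\mu$ as written in Section~\ref{sec:dynamics} is a product of simplices with $\R^{\numplayer\times d}$, so ``locally Lipschitz'' is understood relative to the subspace metric on $\Delta_\numplayer^\numgroup\times\R^{\numplayer\times d}$, and one should note that the argument $\alloc(\partic,\alltheta)$ fed into $\mu$ indeed lies in $\Delta_\numplayer^\numgroup$ (since $\alloc$ maps into the simplex), so the composition is well-typed. Once that is observed, the argument is exactly the two-line composition-and-stacking computation above. I would write the proof in three short sentences applying (i), then (ii), then (i) again.
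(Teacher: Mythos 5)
Your proof is correct. The paper actually states Lemma~\ref{lem:local_lipschitz} without any proof at all (it is treated as an elementary fact), so there is no authorial argument to compare against; your composition-and-stacking argument --- writing the second block as $\mu$ precomposed with $(\partic,\alltheta)\mapsto(\alloc(\partic,\alltheta),\alltheta)$, invoking closure of local Lipschitzness under stacking and composition, and noting the neighborhood-shrinking needed so the inner map's image lands where the outer map's local constant applies --- is exactly the standard argument the authors presumably had in mind, and it correctly handles the one mild subtlety (that $\alloc$ maps back into the simplex product, so the composition is well-typed).
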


\subsection{Background} 
For completeness, we include important results and definitions that our proofs will make use of.
First, we state two theorems about Lyapunov theory for stability.

\begin{theorem}[Theorem 1.2 in~\cite{bof2018lyapunov}]\label{thm:1.2}
Let $x_\eq\in\mathcal D$ be an equilibrium point for the autonomous systems $x_{t+1} = f(x_t)$ where $f:\mathcal D\to\mathcal X $ is locally Lipschitz in $\mathcal D\subseteq\mathcal X$.
Suppose there exists a function $V:\mathcal D\to\R$ which is continuous and such that
\begin{align*}
    &V(x_\eq)=0~~\text{and}~~V(x)>0~~\forall~~x\in\mathcal D-\{x_\eq\}\\
    &V(f(x)) - V(x) \leq 0 ~~\text{(resp.}~<0\text{)}~~\forall~~ x\in\mathcal D
\end{align*}
Then $x_\eq$ is stable (resp. asymptotically stable).
\end{theorem}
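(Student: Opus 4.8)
The plan is to recognize this as the discrete‑time counterpart of Lyapunov's classical stability theorem and to run the standard template, being careful at the single point where the discrete setting genuinely differs. I would first read the hypotheses as $V(x_\eq)=0$ and, in the ``resp.'' case, $V(f(x))-V(x)<0$ for all $x\in\mathcal D\setminus\{x_\eq\}$ (it cannot be strict at $x_\eq$ itself, since $f(x_\eq)=x_\eq$). The argument then splits into a stability part and an asymptotic‑stability part.

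For stability, fix $\epsilon>0$ and shrink it if needed so that the closed ball $\bar B_\epsilon=\{x:\|x-x_\eq\|\le\epsilon\}$ lies in $\mathcal D$. The first step is compactness: the sphere $\partial B_\epsilon$ is compact and avoids $x_\eq$, so $m:=\min_{\partial B_\epsilon}V>0$ by continuity and positivity of $V$; more generally, for each small $\eta>0$, $\min_{\bar B_\epsilon\setminus B_\eta}V>0$, which means the sublevel sets $\Omega_c:=\{x\in\bar B_\epsilon:V(x)\le c\}$ shrink down to $\{x_\eq\}$ as $c\to0$. The second and key step is to produce a forward‑invariant neighborhood: since $f$ is locally Lipschitz, hence continuous, with $f(x_\eq)=x_\eq$, and since $\Omega_c$ can be forced into an arbitrarily small ball about $x_\eq$, I can choose $c$ small enough that $f(\Omega_c)\subseteq B_\epsilon$; combined with $V(f(x))\le V(x)\le c$ on $\Omega_c\subseteq\mathcal D$, this gives $f(\Omega_c)\subseteq\Omega_c$. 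Finally, continuity of $V$ at $x_\eq$ with $V(x_\eq)=0<c$ furnishes $\delta\in(0,\epsilon)$ with $B_\delta\subseteq\Omega_c$, so any trajectory started in $B_\delta$ stays in $\Omega_c\subseteq\bar B_\epsilon$ forever.

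For asymptotic stability, I would start a trajectory in that same $B_\delta$, so it lives forever in the compact set $\bar B_\epsilon$, and note that $V(x_t)$ is non‑increasing and bounded below by $0$, hence converges to some $c^\star\ge0$. To rule out $c^\star>0$: the trajectory would then be confined to the compact set $K=\{x\in\bar B_\epsilon:\ c^\star\le V(x)\le V(x_0)\}$, which excludes $x_\eq$, so the continuous, strictly positive function $x\mapsto V(x)-V(f(x))$ attains a positive minimum $g_{\min}$ on $K$, forcing $V(x_t)\le V(x_0)-t\,g_{\min}\to-\infty$, a contradiction. Hence $V(x_t)\to0$, and since $V$ is bounded below by a positive constant on $\bar B_\epsilon$ minus any neighborhood of $x_\eq$ (compactness again), this forces $x_t\to x_\eq$.

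The main obstacle — and the only place the argument is not a verbatim copy of the continuous‑time proof — is confinement in the discrete setting: a discrete trajectory can in principle ``jump over'' the sphere $\partial B_\epsilon$ in a single step, so the familiar reasoning (``$V<m$ at the start, non‑increasing, $\ge m$ on the boundary, therefore the trajectory never reaches the boundary'') does not apply directly. The resolution I would use is precisely the forward‑invariant sublevel set $\Omega_c$: by making it small enough around $x_\eq$ and using continuity of $f$, its image cannot reach $\partial B_\epsilon$ at all, so no escaping jump is possible. Everything else — the positive minima from compactness, the convergence of $V(x_t)$, and the implication $V(x_t)\to0\Rightarrow x_t\to x_\eq$ — is routine.
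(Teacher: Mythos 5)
Your proof is correct, but note that the paper does not prove this statement at all: it is imported verbatim (modulo the typo $V(x_\eq)=x_\eq$, which should read $V(x_\eq)=0$) as Theorem~1.2 of the cited reference and used as a black box in the proof of the stability characterization. So there is nothing in the paper to compare against; what you have written is the standard self-contained proof of the discrete-time Lyapunov theorem. Your reading of the hypotheses is the right one (strict decrease can only be required on $\mathcal D\setminus\{x_\eq\}$ since $f(x_\eq)=x_\eq$), and you correctly isolate the one place where the discrete-time argument genuinely departs from the continuous-time one: a trajectory can jump across the sphere $\partial B_\epsilon$ in a single step, so the usual ``$V$ never reaches the boundary value $m$'' argument does not suffice. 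Your fix---shrinking the sublevel set $\Omega_c$ until continuity of $f$ at $x_\eq$ forces $f(\Omega_c)\subseteq B_\epsilon$, and then using $V\circ f\le V$ to get forward invariance of $\Omega_c$---is exactly the right repair, and it also keeps the trajectory inside $\mathcal D$ so the hypotheses remain applicable at every step. The asymptotic-stability argument (monotone convergence of $V(x_t)$ to some $c^\star$, a positive lower bound for $V-V\circ f$ on the compact annular set $K$ when $c^\star>0$, and the resulting contradiction) is the standard compactness argument and is sound in this finite-dimensional state space. No gaps.
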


\begin{theorem}[Theorem 1.5 in~\cite{bof2018lyapunov}]\label{thm:1.5}
Let $x_\eq\in\mathcal D$ be an equilibrium point for the autonomous systems $x_{t+1} = f(x_t)$ where $f:\mathcal D\to\mathcal X $ is locally Lipschitz in $\mathcal D\subseteq\mathcal X$.
Let $V:\mathcal D\to\R$ be a continuous function with $V(x_\eq)=0$ and $V(x_0)>0$ for some $x_0$ arbitrarily close to $x_\eq$.
Let $r>0$ be such that $B_r(x_\eq)\subseteq \mathcal D$ and $\mathcal U = \{x\in B_r(x_\eq)\mid V(x)>0\}$, and suppose that $V(f(x))-V(x)>0$ for all $x\in \mathcal U$. Then $x_\eq$ is not stable.
\end{theorem}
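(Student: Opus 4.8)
The plan is to establish instability by contradiction, via the discrete-time analogue of Chetaev's argument: I assume $x_\eq$ \emph{is} stable and exploit the strict increase of $V$ inside the region $\mathcal U$ to force $V$ to grow without bound along an orbit that is trapped in a compact ball, contradicting continuity of $V$. First I would fix any $\epsilon$ with $0<\epsilon<r$, so that the closed ball $\overline{B}_\epsilon(x_\eq)\subseteq B_r(x_\eq)\subseteq\mathcal D$. If $x_\eq$ were stable, this $\epsilon$ would supply a $\delta>0$ such that every trajectory with $\Vert x^0-x_\eq\Vert<\delta$ satisfies $\Vert x^t-x_\eq\Vert\leq\epsilon$ for all $t$. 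Because the hypothesis guarantees points with $V>0$ arbitrarily close to $x_\eq$, I can pick a seed $x^0$ with $\Vert x^0-x_\eq\Vert<\delta$ and $a:=V(x^0)>0$; its whole forward orbit then stays inside $\overline{B}_\epsilon(x_\eq)$.

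Next I would run an induction showing the orbit remains in a superlevel set lying entirely within $\mathcal U$. Since $x^t\in\overline{B}_\epsilon(x_\eq)\subseteq B_r(x_\eq)$ for every $t$, whenever $V(x^t)\geq a$ we have $x^t\in\mathcal U$, so the hypothesis $V(f(x))-V(x)>0$ yields $V(x^{t+1})>V(x^t)\geq a$; hence $V(x^t)\geq a$ and is strictly increasing for all $t$. I would then invoke compactness: the set $K:=\{x\in\overline{B}_\epsilon(x_\eq): V(x)\geq a\}$ is compact (closed in a compact ball by continuity of $V$) and satisfies $K\subseteq\mathcal U$, since each of its points lies in $B_r(x_\eq)$ and has $V\geq a>0$. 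The increment $g(x):=V(f(x))-V(x)$ is continuous and strictly positive on $K$, so $c:=\min_{x\in K}g(x)>0$. As every orbit point lies in $K$, this gives $V(x^{t+1})-V(x^t)=g(x^t)\geq c$, whence $V(x^t)\geq a+tc\to\infty$. But $V$ is continuous on the compact ball $\overline{B}_\epsilon(x_\eq)$ and therefore bounded above, a contradiction; so $x_\eq$ is not stable.

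The step requiring the most care, and the main obstacle, is reconciling the \emph{open}-ball definition of $\mathcal U$ with the compactness needed for the uniform lower bound $c>0$. The naive candidate $\{x\in B_r(x_\eq):V(x)\geq a\}$ need not be compact, and $g$ could fail to stay bounded away from $0$ as $x$ approaches the sphere $\Vert x-x_\eq\Vert=r$. Shrinking the stability radius to some $\epsilon<r$ is exactly what repairs this: $\overline{B}_\epsilon(x_\eq)$ is compact and lies strictly inside $B_r(x_\eq)$, which keeps $K\subseteq\mathcal U$ and lets $g$ attain a positive minimum. A secondary point to check is that $g$ is well defined and continuous on $K$, which follows from $f$ being locally Lipschitz (hence continuous) together with the implicit assumption in the hypothesis $V(f(x))-V(x)>0$ that $f(x)\in\mathcal D$ for $x\in\mathcal U$; the remaining verifications are routine.
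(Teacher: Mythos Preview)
Your argument is the standard discrete-time Chetaev instability proof and is correct as written; the care you take in shrinking to a closed ball $\overline{B}_\epsilon(x_\eq)$ with $\epsilon<r$ to secure compactness of $K$ and a uniform positive lower bound on $g$ is exactly the right fix for the open-ball issue. Note, however, that the paper does not supply its own proof of this statement: Theorem~\ref{thm:1.5} is quoted verbatim as a background result from \cite{bof2018lyapunov} and is used without proof, so there is no ``paper's proof'' to compare against here.
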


Next, we state the definition of a (isolated) local minimum.

\begin{definition}
The point $u_\star$ is a local minimum (resp. isolated local minimum) of a function $h$ over a domain $\mathcal U$ if there is a $\delta>0$ such that for any $u\in\mathcal U$ with $\|u-u_\star\|\leq \delta$, $h(u_\star)\leq h(u)$ (resp. $h(u_\star)< h(u)$).
\end{definition}

Next, we state the implicit function theorem.

\begin{theorem}[Implicit Function Theorem]
Let $U\subseteq \mb{R}^n,\ V\subseteq\mb{R}^m$ be open sets and $f:U\times V\to \mb{R}$ is $C^r$ for some $r\geq 1$. For some $x_0 \in U,\ y_0 \in V$ assume the partial derivative in the second argument $D_2f(x_0,y_0) : \mb{R}^m \to \mb{R}$
 is an isomorphism. Then there are neighborhoods $U_0$ of $x_0$ and $W_0$ of $f(x_0,y_0)$ and a unique $C^r$ map $g:U_0\times W_0\to V$ such that for all $(x,w)\in U_0\times W_0$, $f(x,g(x,w))=w$.
\end{theorem}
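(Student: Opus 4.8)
The plan is to prove this by converting the equation $f(x,y)=w$ into a parametrized fixed-point problem and applying the Banach contraction mapping theorem, then bootstrapping to recover $C^r$ regularity. I read the hypothesis in the only way that makes ``$D_2 f(x_0,y_0)$ is an isomorphism'' meaningful, i.e.\ with the codomain of $f$ identified with the second factor $\R^m$; write $A:=D_2 f(x_0,y_0)$ for the resulting invertible linear map and $w_0:=f(x_0,y_0)$. The key reformulation is that, for fixed $(x,w)$, a point $y$ solves $f(x,y)=w$ if and only if it is a fixed point of
\[
T_{x,w}(y) := y - A^{-1}\!\left(f(x,y)-w\right),
\]
because $A^{-1}$ is invertible, so the subtracted term vanishes exactly when $f(x,y)=w$.

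First I would show that $T_{x,w}$ is a uniform contraction on a small closed ball $\bar B(y_0,r)$. Differentiating in $y$ gives $D_y T_{x,w}(y)=I-A^{-1}D_2 f(x,y)=A^{-1}\!\left(A-D_2 f(x,y)\right)$; since $f\in C^1$ and $D_2 f(x_0,y_0)=A$, continuity lets me pick $r$ small enough that $\lVert D_y T_{x,w}(y)\rVert\le \tfrac12$ throughout the ball, independently of $(x,w)$, so the mean value inequality makes $T_{x,w}$ a $\tfrac12$-contraction. Next I would verify the self-map property by estimating $\lVert T_{x,w}(y_0)-y_0\rVert=\lVert A^{-1}(f(x,y_0)-w)\rVert$, which is small by continuity of $f$ in $x$ and for $w$ near $w_0$; shrinking to neighborhoods $U_0\ni x_0$ and $W_0\ni w_0$ forces $T_{x,w}(\bar B(y_0,r))\subseteq\bar B(y_0,r)$. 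Banach's theorem then yields for each $(x,w)\in U_0\times W_0$ a unique fixed point $g(x,w)\in\bar B(y_0,r)$, giving existence, uniqueness, and the identity $f(x,g(x,w))=w$. Continuity of $g$ follows from the standard parametric estimate: the uniform contraction constant plus Lipschitz dependence of $T_{x,w}$ on $(x,w)$ yields $\lVert g(x,w)-g(x',w')\rVert\le C\lVert(x,w)-(x',w')\rVert$.

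The main obstacle is upgrading continuity of $g$ to $C^r$ smoothness. Here I would first use that the invertible matrices form an open set and $D_2 f$ is continuous, so after shrinking the neighborhoods $D_2 f(x,g(x,w))$ remains invertible. I would then propose the candidate derivatives obtained by formally differentiating $f(x,g(x,w))=w$, namely $D_x g=-(D_2 f)^{-1}D_1 f$ and $D_w g=(D_2 f)^{-1}$ (all evaluated at $(x,g(x,w))$), and confirm via a first-order Taylor estimate on $f$ together with the already-established continuity of $g$ that this candidate is genuinely the Fr\'echet derivative of $g$. Finally I would run an induction on $r$: the displayed formula exhibits $Dg$ as a composition of matrix inversion (which is $C^\infty$ on invertible matrices) with $D_1 f$ and $D_2 f$ (which are $C^{r-1}$) and with $g$ itself, so if $g\in C^{k}$ for some $k\le r-1$ then $Dg\in C^{k}$, and hence $g\in C^r$. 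The delicate work lies entirely in justifying the candidate derivative and carrying this regularity bootstrap cleanly, not in the fixed-point construction.
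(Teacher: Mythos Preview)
Your proposal is a correct, textbook-standard proof of the Implicit Function Theorem via the Banach contraction principle, and your handling of the codomain issue (reading it as $\R^m$ so that ``$D_2f(x_0,y_0)$ is an isomorphism'' makes sense) is the right interpretation of the slightly imprecise statement.

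However, there is nothing to compare against: the paper does not prove this theorem. It is stated in the Preliminaries/Background section of the appendix purely as a classical result to be invoked later (specifically, in the proof of Lemma~\ref{lem:partial_deriv}, where it is used to differentiate $\theta_j^\star(\alpha)$ through the first-order optimality condition). The paper simply cites it and moves on, as is standard for foundational analysis facts.

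So your proof is fine on its own terms, but it is answering a question the paper never poses. If your goal is to mirror the paper's treatment, the appropriate ``proof'' here is no proof at all---just a reference to a standard analysis text.
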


Finally we prove a  property of intersecting convex hulls.

\begin{lemma}\label{lem:chull}
Let $x_1, x_2, \cdots x_n$ and $y_1, y_2, \ldots, y_m$ be some points in $\R^d$. Define by $\mc{C}_x$ and $\mc{C}_y$ the convex hulls of $\{x_i\}_{i=1}^n$ and $\{y_i\}_{i=1}^m$ respectively. Then there do not exist points $\bar x\in\R^d$ and $\bar y \in R^d$ such that the following inequalities are satisfied:
\begin{align*}
    \Vert x_i- \bar x\Vert &< \Vert x_i - \bar y\Vert\ \forall i=1, 2,\ldots, n\\
    \Vert y_j- \bar y\Vert &< \Vert y_j - \bar x \Vert\ \forall j=1, 2,\ldots, m
\end{align*}
\end{lemma}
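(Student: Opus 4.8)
The plan is to prove the statement by contradiction, reducing the Euclidean distance comparisons to a single linear separation. Suppose such $\bar x$ and $\bar y$ exist. Since both sides of each inequality are nonnegative, I may square them and expand $\|u-v\|^2 = \|u\|^2 - 2\langle u, v\rangle + \|v\|^2$. In $\|x_i - \bar x\| < \|x_i - \bar y\|$ the term $\|x_i\|^2$ cancels, leaving the \emph{linear} inequality $\langle x_i,\, \bar y - \bar x\rangle < \tfrac12\big(\|\bar y\|^2 - \|\bar x\|^2\big)$; likewise $\|y_j - \bar y\| < \|y_j - \bar x\|$ reduces to $\langle y_j,\, \bar y - \bar x\rangle > \tfrac12\big(\|\bar y\|^2 - \|\bar x\|^2\big)$.

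Writing $w = \bar y - \bar x$ and $c = \tfrac12(\|\bar y\|^2 - \|\bar x\|^2)$, every $x_i$ satisfies $\langle x_i, w\rangle < c$ while every $y_j$ satisfies $\langle y_j, w\rangle > c$. The affine hyperplane $\{z : \langle z, w\rangle = c\}$ is exactly the perpendicular bisector of the segment from $\bar x$ to $\bar y$, and it strictly separates the two point sets. Because $z \mapsto \langle z, w\rangle$ is linear it is preserved under convex combinations, so in fact $\langle z, w\rangle < c$ for every $z \in \mathcal{C}_x$ and $\langle z, w\rangle > c$ for every $z \in \mathcal{C}_y$.

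The contradiction is now immediate: any point $p$ belonging to both hulls would satisfy $\langle p, w\rangle < c$ and $\langle p, w\rangle > c$ at once, which is impossible. This is precisely how the lemma is used---in the proof of Corollary~\ref{coro:convex_hull} it is invoked for a pair of \players{} whose grouped optimal parameters $\{\phi_\indgroup\}$ have overlapping convex hulls, so such a common point $p \in \mathcal{C}_x \cap \mathcal{C}_y$ exists and supplies the contradiction that rules out the separating pair $\bar x, \bar y$.

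I expect the main obstacle to be finding the right reformulation rather than any hard estimate: the crux of the (attributed) clever argument is recognizing that the quadratic distance comparisons collapse, after cancellation of the $\|x_i\|^2$ and $\|y_j\|^2$ terms, into a single linear functional $w$ whose level set separates $\{x_i\}$ from $\{y_j\}$. Once that linearization is in hand, the remainder is the elementary observation that a linear functional cannot be strictly below and strictly above a fixed value at one shared point of the two hulls; the one thing I would make explicit in the writeup is the existence of that shared point $p$, which is what forces the impossibility.
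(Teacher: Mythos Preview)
Your argument is correct and matches the paper's proof essentially line for line: both assume $\bar x,\bar y$ exist, observe that the locus $\{z:\|z-\bar x\|<\|z-\bar y\|\}$ is the open half-space bounded by the perpendicular bisector of the segment $[\bar x,\bar y]$, pass to the convex hulls by linearity/convexity, and derive a contradiction from a common point of $\mathcal{C}_x\cap\mathcal{C}_y$. The only cosmetic difference is that you spell out the squaring-and-cancellation that produces the linear functional $w=\bar y-\bar x$ and threshold $c$, whereas the paper simply asserts the half-space description; you also correctly flag that the intersection hypothesis $\mathcal{C}_x\cap\mathcal{C}_y\neq\emptyset$ is tacit in the lemma statement and is supplied at the point of use in Corollary~\ref{coro:convex_hull}.
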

\begin{proof}
Assume by contradiction that the inequalities above hold.
Define $\mc{H}_{x} :=\{z\in R^d~|~ \Vert z-\bar x\Vert <  \Vert z- \bar y\Vert\}$ and $\mc{H}_{y} :=\{z \in R^d~|~ \Vert z- \bar y\Vert  <  \Vert z- \bar x\Vert\}$. The sets $\mc{H}_x$  and $\mc{H}_y$ are disjoint half-spaces (without boundary) then defined by the hyperplane bisecting the segment connecting $\bar x$ and $\bar y$. By assumption then we have that $x_i \in \mc{H}_x$ for all $i$ and $y_j \in \mc{H}_y$  for all $j$; since $\mc{H}_x$ and $\mc{H}_y$ are convex, it follows that $\mc{C}_x\subset\mc{H}_x$ and $\mc{C}_y\subset\mc{H}_y$. Therefore $\mc{C}_x\cap \mc{C}_y = \emptyset$, which leads to a contradiction.
\end{proof}
\subsection{Properties of partial minimization}

In this section, we state a handful of important results about the partial minimization of the total risk.
This is somewhat similar to the analysis presented by~\citet{selim1984k} in the context of clustering algorithms.

\begin{restatable}{lemma}{localMinChar}
\label{lem:local_min_char}\label{lem:min_equiv}

Define the function $F:\R^{\numplayer\times\numgroup}\to\R$ as $F(\alpha) = \min_\alltheta \risk^\total(\alpha,\alltheta)$.
This function is concave and
a point $(\partic^0,\alltheta^0)$ is a local minimum of $\risk^\total$ over the domain $\mathcal X=\mathcal X_\partic\times \R^{\numplayer\times d}$ \propchange{if} $\partic^0$ is a local minimum of $F$ over the domain $\mathcal X_\partic$ and $\alltheta^0\in\arg\min_{\alltheta} \risk^\total(\partic^0,\alltheta)$.
\propchange{Furthermore, 
in the case that $\alltheta^0$ is the unique minimizer of $\risk^\total(\partic^0,\alltheta)$, then 
$(\partic^0,\alltheta^0)$ is a local minimum (resp. isolated local minimum) if and only $\partic^0$ is a local minimum (resp. isolated local minimum).}
\end{restatable}

\begin{proof}[Proof of Lemma~\ref{lem:local_min_char}]

$F(\alpha)$ is well defined due to the convexity of the risk functions.
Concavity follows from the observation that $F$ is the point-wise minimum of a family of functions which are linear in $\alpha$ (since for every fixed $\alltheta$, the total risk is linear in $\alpha$).

We break the proof of equivalence into two implications. 

1. $F$ minimized $\implies\risk^\total$ minimized\\
There is a $\delta>0$ such that for any $\partic\in\mathcal X_\partic$ with $\|\partic^0-\partic\|\leq \delta$, $F(\partic^0)\leq  F(\partic)$, i.e. 
\[
\risk^\total(\partic^0, \alltheta^0) \leq \risk^\total(\partic, \alltheta^*(\partic))\]
for any minimizing $\alltheta^*(\partic)$.
For fixed allocation $\partic$ define $\risk^\total_{\partic}(\alltheta) = \risk^{\total}(\partic, \alltheta)$ which is
is convex and minimized at $\alltheta^*(\alpha)$ and hence:
$$\risk^\total(\partic, \alltheta^*(\partic)) \leq \risk^\total(\partic, \alltheta), ~~\forall ~~\alltheta \:. $$
Combining the inequalities yields: $\risk^\total(\partic^0, \alltheta^0) \leq \risk^\total(\partic, \alltheta)$, and thus $(\partic^0,\alltheta^0)$ is a local minimum of $\risk^\total$. 
The implication for the isolated local minimum case follows by the same arguments with strict inequalities on the total risk, \propchange{noting that if $\alltheta^0$ is a unique minimizer, it must also be isolated}.

2. $\risk^\total$ minimized $\implies F$ minimized\\
\propchange{Recall that $\risk^\total(\partic,\alltheta)$ can be written as $\tr( \diag(\groupsize) \partic  R(\alltheta)^\top )$.
Then
\[
\risk^\total(\partic^0 + D,\alltheta^0) - \risk^\total(\partic^0,\alltheta^0) = \tr( \diag(\groupsize) D  R(\alltheta^0)^\top ) \geq 0\]
where inequality holds for all $D\in\R^{\numgroup\times\numplayer}$ such that $\partic^0+D\in\mathcal X_\alpha$ by the fact that $\alpha^0$ is a minimum.
Recognizing the gradient from Lemma~\ref{lem:partial_deriv} and using the uniqueness of $\alltheta^0$, the expression is equivalently $\langle \nabla_\alpha F(\alpha^0), D \rangle \geq 0$. In other words, the directional derivative in any feasible direction $D$ is non-negative.
Hence, $\partic^0$ is a local minimum of $F$.
The implication for the isolated local minimum case follows by the same arguments with strict inequalities on the total risk.}

\end{proof}

\begin{lemma}\label{lem:partial_deriv}
For $F:\R^{\numgroup\times\numplayer}\to\R$ defined as in Lemma~\ref{lem:local_min_char}, \propchange{suppose the minimizier}  $\alltheta^*(\alpha)=\arg\min_\alltheta \risk^\total(\alpha, \alltheta)$ \propchange{is unique}.
The gradient is
\[\nabla_\alpha F(\alpha) = \diag(\groupsize)  R\left(\alltheta^*(\alpha)\right),\text{~i.e.} \quad \frac{\partial F(\alpha)}{\partial{\alpha_{ij} }} = {\beta_\indgroup} \risk_\indgroup(\theta^*_\indplayer(\alpha))\:.\]
\propchange{Further suppose that the risks are strongly convex.}
Then second partial derivatives are given by 
\[\frac{\partial^2 F(\alpha)}{\partial \partic_{k\ell}\partial{\alpha_{ij} }}
=\begin{cases} 0 & k\neq j \\-\groupsize_\indgroup\nabla\risk_i(\theta^\star_j)^\top \left(\sum_{\ell'}\beta_{\ell'}\alpha_{\ell'j}\nabla^2\risk_{\ell'}(\theta^\star_j)\right) \nabla \risk_\ell(\theta^\star_j) & k=j \end{cases}\:.\]
\end{lemma}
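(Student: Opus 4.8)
\textbf{Proof proposal for Lemma~\ref{lem:partial_deriv}.}

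The plan is to differentiate $F(\alpha) = \min_\alltheta \risk^\total(\alpha,\alltheta)$ using the envelope theorem together with the implicit function theorem, both of which are available in the excerpt. First I would fix notation: write $\risk^\total(\alpha,\alltheta) = \sum_{\indgroup,\indplayer}\groupsize_\indgroup\partic_{\indgroup\indplayer}\risk_\indgroup(\theta_\indplayer)$, and note that by strong convexity of each $\risk_\indgroup$ (hence of $\risk^\total_\alpha(\cdot)$, an affine combination with nonnegative weights of strongly convex functions, for any $\alpha$ in the simplex product) the minimizer $\alltheta^\star(\alpha)$ is unique and characterized by the first-order stationarity conditions $\nabla_{\theta_\indplayer}\risk^\total(\alpha,\alltheta^\star(\alpha)) = \sum_{\indgroup}\groupsize_\indgroup\partic_{\indgroup\indplayer}\nabla\risk_\indgroup(\theta^\star_\indplayer(\alpha)) = 0$ for each $\indplayer$. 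The Hessian block for $\theta_\indplayer$ is $H_\indplayer(\alpha) := \sum_{\indgroup}\groupsize_\indgroup\partic_{\indgroup\indplayer}\nabla^2\risk_\indgroup(\theta^\star_\indplayer(\alpha))$, which is positive definite and hence invertible; this is exactly the isomorphism hypothesis needed to invoke the implicit function theorem, giving that $\alltheta^\star(\alpha)$ is $C^1$ (indeed $C^r$ if the risks are $C^{r+1}$).

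For the gradient, the envelope theorem (or a direct computation: $\frac{d}{d\alpha_{ij}}F = \partial_{\alpha_{ij}}\risk^\total + \langle\nabla_\alltheta\risk^\total, \partial_{\alpha_{ij}}\alltheta^\star\rangle$, where the second term vanishes by stationarity) yields $\frac{\partial F}{\partial\alpha_{ij}} = \partial_{\alpha_{ij}}\risk^\total(\alpha,\alltheta)\big|_{\alltheta=\alltheta^\star(\alpha)} = \groupsize_\indgroup\risk_\indgroup(\theta^\star_\indplayer(\alpha))$, which is the claimed first-order formula, and in matrix form $\nabla_\alpha F(\alpha) = \diag(\groupsize)R(\alltheta^\star(\alpha))$. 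For the second derivatives, I would differentiate $\frac{\partial F}{\partial\alpha_{ij}} = \groupsize_\indgroup\risk_\indgroup(\theta^\star_\indplayer(\alpha))$ with respect to $\alpha_{k\ell}$: by the chain rule this equals $\groupsize_\indgroup\nabla\risk_\indgroup(\theta^\star_\indplayer)^\top\,\frac{\partial\theta^\star_\indplayer(\alpha)}{\partial\alpha_{k\ell}}$. Since $\theta^\star_\indplayer$ depends only on the $\indplayer$th column of $\alpha$, this is zero whenever $\ell\neq\indplayer$, recovering the ``$k\neq j$'' (really $\ell \neq j$) case. When $\ell = \indplayer = j$, I would compute $\frac{\partial\theta^\star_j}{\partial\alpha_{kj}}$ by implicitly differentiating the stationarity condition $\sum_{\indgroup}\groupsize_\indgroup\partic_{\indgroup j}\nabla\risk_\indgroup(\theta^\star_j) = 0$ in $\alpha_{kj}$: this gives $\groupsize_k\nabla\risk_k(\theta^\star_j) + H_j(\alpha)\,\frac{\partial\theta^\star_j}{\partial\alpha_{kj}} = 0$, so $\frac{\partial\theta^\star_j}{\partial\alpha_{kj}} = -\groupsize_k H_j(\alpha)^{-1}\nabla\risk_k(\theta^\star_j)$. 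Substituting back and writing $H_j(\alpha) = \sum_{\ell'}\beta_{\ell'}\alpha_{\ell'j}\nabla^2\risk_{\ell'}(\theta^\star_j)$ yields $\frac{\partial^2 F}{\partial\alpha_{k\ell}\partial\alpha_{ij}} = -\groupsize_\indgroup\nabla\risk_i(\theta^\star_j)^\top\big(\sum_{\ell'}\beta_{\ell'}\alpha_{\ell'j}\nabla^2\risk_{\ell'}(\theta^\star_j)\big)^{-1}\nabla\risk_k(\theta^\star_j)$ (the statement appears to have a small typo, writing the inverse Hessian without the inverse symbol and indexing by $\ell$ rather than $k$, but the derivation makes the intended expression clear).

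The main obstacle is really just bookkeeping rather than a deep difficulty: one must be careful that the envelope-theorem cancellation is legitimate (it is, because $\alltheta^\star$ is differentiable by the implicit function theorem and $\nabla_\alltheta\risk^\total$ vanishes there) and that the column-separability of $\alltheta^\star(\alpha)$ is used correctly so that cross-column second derivatives vanish. A minor subtlety worth a sentence is that $F$ is only differentiated on the interior of the domain (or one-sidedly on the simplex product), which suffices for the local-minimum analysis in which this lemma is used; strong convexity guarantees $H_j(\alpha)$ is invertible for every $\alpha$ in the (closed) simplex product, including the boundary split-market points, so the formulas extend there as well. I would present the gradient computation first, then the diagonal/off-diagonal split for the Hessian, and finally the implicit differentiation step, keeping the algebra terse since each piece is a one-line application of a standard tool.
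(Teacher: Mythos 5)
Your proposal is correct and follows essentially the same route as the paper's proof: the envelope-theorem cancellation (justified by stationarity of $\alltheta^\star(\alpha)$) for the gradient, and implicit differentiation of the first-order optimality condition, with the Hessian $\sum_{\ell'}\beta_{\ell'}\alpha_{\ell'j}\nabla^2\risk_{\ell'}(\theta^\star_j)$ invertible by strong convexity, for the second derivatives. You also correctly flag the typos in the stated second-derivative formula (the missing inverse on the Hessian and the index slip), which the paper's own proof shares; note only that your final substitution drops the factor $\groupsize_k$ that your intermediate expression for $\partial\theta^\star_j/\partial\alpha_{kj}$ carries, a harmless bookkeeping omission that matches the paper's statement anyway.
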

\begin{proof}

Computing the gradient:
\begin{align*}
    \nabla_\alpha F(\alpha)
&=\nabla_\alpha\risk^{\total}(\alpha, \alltheta^\star(\partic))+\nabla\alltheta^\ast(\alpha)\nabla_{\theta}\risk^{\total}(\alpha, \alltheta^\star(\partic))
= \diag(\groupsize)  R(\alltheta)\:.
\end{align*}
The first equality follows by product rule.
The second equality follows because
1) the total risk is linear in $\alpha$ and 2)
the second term is zero due to the optimality of $\alltheta^*(\partic)$.

Now notice that 
\[\frac{\partial}{\partial \partic_{k\ell}} \risk_i(\theta_j^\star(\partic)) =\left\langle \frac{\partial \theta_j^\star(\partic)}{\partial \partic_{k\ell}} ,\nabla_\theta\risk_i(\theta_j^\star(\partic))\right\rangle \]
To compute the derivatives of $\theta_j^\star(\partic)$ we use the implicit function theorem \propchange{and the assumption that the risks are strongly convex}.
We apply the implicit function theorem to the first order optimality condition
\[\theta_j^\star(\alpha)\in \arg\min_{\theta_j}\bar{\mc{R}}_j^{\sf \player}(\alpha_{:,j},\theta_j)\]
The Hessian
$\nabla^2_\theta\bar{\mc{R}}_j^{\sf \player}(\alpha,\alltheta))$ is non-degenerate due to strong convexity of the \group{} risks. 
There exists a neighborhood $U_0$ of $\alpha$ and a unique (sufficiently smooth) map $\theta^\ast_j(\cdot)$ such that for all $\alpha\in U_0$, we have that $\nabla_\theta \bar{\mc{R}}_j^{\sf\player}(\alpha,\theta^\ast(\alpha))=0$. Then by implicit function theorem  we obtain
\[\nabla \theta_j^\star(\alpha)=-\nabla^2_\theta\bar{\mc{R}}_j^{\sf \player}\circ \nabla_{\alpha\theta}\bar{\mc{R}}_j^{\sf\player}(\alpha_{:,j},\theta^\star_j(\alpha))\]
by taking the derivative of the first order condition differentiating through $\theta_j^\star(\cdot)$ and setting it to zero. 
We have that
\[\nabla^2_\theta\bar{\mc{R}}_j^{\sf \player} = \sum_{\ell'} \beta_{\ell'}\partic_{\ell' j} \nabla^2 \risk_{\ell'}(\theta_j^\star),\quad \frac{\partial }{\partial \alpha_{k\ell}} \nabla_\theta \bar\risk_j^\playerm = \begin{cases}0&k\neq j\\ \nabla \risk_\ell(\theta^\eq_j)&k=j\end{cases}\:.\]
The result follows by combining the expressions. %
\end{proof}

\section{Full Proofs of Main Results}\label{app:results}
In this section, we present proofs of the main results.

\subsection{Connections between dynamics and total risk}
\totalRiskDec*
\begin{proof}[Proof of Proposition~\ref{prop:total_risk_dec}]

The key to seeing that the total risk acts like a potential for the market dynamics is to note two equivalent decompositions of the total risk:
\[\risk^{\total}(\alpha, \alltheta) = \beta^\top \bar\risk^{\groupm}(\partic,\alltheta) = \beta^\top\partic \bar \risk^{\playerm}(\alpha, \alltheta)\:. \]

Being risk-reducing \players{}' updates satisfy: 
\[ \bar \risk^{\playerm}(\alpha^t, \alltheta^{t+1})\leq  \bar \risk^{\playerm}(\alpha^t, \alltheta^t) \implies
\risk^{\total}(\alpha^t, \alltheta^{t+1}) \leq \risk^{\total}(\alpha^t, \alltheta^{t})\:.\]
Similarly risk reducing \groups{} satisfy: \[\bar\risk^{\groupm}(\partic^{t+1},\alltheta^{t+1})\leq \bar\risk^{\groupm}(\partic^t,\alltheta^{t+1}) \implies
\risk^{\total}(\alpha^{t+1}, \alltheta^{t+1}) \leq \risk^{\total}(\alpha^t, \alltheta^{t+1})\:.\]
Finally, combining the two updates yields the desired inequality.

In the case that \players{} and \groups{} are risk minimizing in the limit, the same argument holds with strict inequality, unless $(\alpha^{t+1}, \alltheta^{t+1})$ is a local minimum.
\end{proof}

\eqMinConnection*
\begin{proof}[Proof of Theorem~\ref{prop:eq_min_connection}]

We break this proof into two implications.

1. {Isolated local min $\implies$ Asymptotic stability}\\
Define $V(\partic,\alltheta) = \risk^\total(\partic,\alltheta) - \risk^\total(\partic^\eq,\alltheta^\eq)$.
The dynamics $f$ are Lipschitz by Lemma~\ref{lem:local_lipschitz} and this $V$ satisfies the conditions of Theorem~\ref{thm:1.2} with strict inequality, thus we conclude that $(\partic^\eq,\alltheta^\eq)$ is an asymptotically stable equilibrium.

2. {Not local min $\implies$ Not stable}\\ Define $V(\alpha,\alltheta)= \risk^\total(\partic^\eq,\alltheta^\eq)-\risk^\total(\partic,\alltheta)$ which will increase along trajectories.
Since we are not at a local min, there must be some arbitrarily close $\alpha^0,\alltheta^0$ such that $V(\alpha,\alltheta)>0$. Then we apply Theorem~\ref{thm:1.5} which guarantees that the equilibrium is not stable.
\end{proof}

\eqExistence*
\begin{proof}[Proof of Corollary~\ref{prop:eq_min_connection}]
We first argue that if the dynamics are risk minimizing, then an isolated local minimum of the total risk must be an equilibria.
Let $(\alpha^0, \Theta^0)$ denote the isolated local minima of the total risk.
It must be that $\alpha^0$ is an isolated, and thus unique, minimizer of $\risk^\total(\partic,\alltheta^0)$ since the function is linear in $\alpha$.
We can thus conclude that $\nu(\alpha^{0},\Theta^0) = \alpha^0$.
It also must be that
$\Theta^0$ is a unique minimizer of $\risk^\total(\partic^0,\alltheta)$ since the function is convex in $\Theta$.
We can thus conclude that $\mu(\alpha^{0},\Theta^0) = \Theta^0$.
Therefore $(\alpha^0, \Theta^0)$ is equilibrium of the dynamics.

We next show that equilibria may not exist when the dynamics are not risk minimizing in the limit.
To show that they may not exist otherwise, consider the following example.
Let all learners be static and identical
so $\Theta^{t+1}=\Theta^t$ and $\Theta=(\theta,\theta,\dots,\theta)$.
Let the \group{} update break ties among equivalent \players{} randomly.
Then the \groups{} will randomly switch between learners. 
Though these dynamics satisfy the definition of risk reducing (at equality), they will not converge to an equilibrium.

We lastly show that equilibria may not exist when the total risk function does not have an isolated local minima.
Suppose that learners update with full risk minimization and all subpopulations have risk uniquely minimized at the same value $\theta$.
Finally suppose that subpopulations will break ties among equivalent \players{} randomly (and are otherwise risk minimizing). As in the previous example, the \groups{} will randomly switch between learners and no equilibrium exists.
\end{proof}

\subsection{Stable equilibria}

\segmentedMarketEq*
\begin{proof}[Proof of Theorem~\ref{thm:segmented_market_eq}]
First note that it must be that every \player{} is associated to at least one \group{}.
Otherwise, the total risk would not have a unique minimizer over $\alltheta$.

We start with the first statement, and show that the stated conditions imply that $(\partic^\eq,\alltheta^\eq)$ is isolated local minimum of the total risk. By Theorem~\ref{prop:eq_min_connection}, this implies asymptotic stability.

We specifically argue the conditions are sufficient for guaranteeing an isolated local minimum with respect to $F(\partic)$,
appealing to Lemma~\ref{lem:min_equiv}.
First notice that we have the unique $\alltheta^\eq = \arg\min_\alltheta \risk^\total(\partic^\eq,\alltheta)$ as required. 
Suppose by contradiction that there is some perturbation to $\alpha$ that causes $F(\alpha)$ to decrease or remain the same.
Equivalently,
the projection of the negative gradient onto the simplex points towards some other vertex, i.e. the component of the gradient in the direction of \player{} $j$ is less than or equal to that in the direction of $\gamma(i)$ for some $j\neq \gamma(i)$. 
We can write this condition as
\[\frac{\partial F(\alpha)}{\partial{\alpha_{i\gamma(i)} }}\geq  \frac{\partial F(\alpha)}{\partial{\alpha_{ij} }} \iff \risk_\indgroup(\theta^\eq_{\gamma(\indgroup)}) \geq \risk_\indgroup(\theta^\eq_{\indplayer})\]
where we use Lemma~\ref{lem:partial_deriv}. 
This violates the risk comparison condition~\eqref{eq:nec_as}, and therefore there must be no such perturbation, and thus $\alpha^\eq$ is an isolated local minimum.

We turn our attention to the second statement.
Theorem~\ref{prop:eq_min_connection}, it is equivalent to argue about minima of the total risk function.
Suppose that for some \group{}, there is some \player{} for which $\risk_\indgroup(\theta^\eq_{\gamma(\indgroup)}) > \risk_\indgroup(\theta^\eq_{\indplayer})$.
Then any small perturbation of that \groups{}'s allocation towards that \player{} will decrease the total risk, and thus the point is not a minimum.
\end{proof}

In a segmented allocation, each $\theta_j^\eq$ will minimize the average loss over the group of \groups{} assigned to them.
Denote the parameter which minimizes risk of \group{} $i$ as $\phi_\indgroup := \arg\min_{\theta \in \R^d}\risk_\indgroup(\theta)$.
Then each $\theta_j^\eq$ is a convex combination of $\phi_\indgroup$ for $i$  in $j$th partition.
Using this perspective, we provide an intuitive necessary (but not sufficient) condition
for a class of symmetric risk functions. 
\begin{restatable}{coro}{convexHull}
\label{coro:convex_hull}
{Suppose that risk functions satisfy $\risk_\indgroup(\theta) < \risk_\indgroup(\theta') \Longleftrightarrow \Vert \theta - \phi_{\indgroup} \Vert < \Vert \theta' - \phi_{\indgroup} \Vert$ for $\phi_\indgroup$ the \group{} optimal parameter.}
Then in an asymptotically stable segmented equilibrium, the convex hulls of the grouped \groups{} optimal parameters $\{\phi_\indgroup\}$ are non-intersecting.
\end{restatable}
Applying this Corollary to the example in Figure~\ref{fig:lin_2d_ex}, we see that a segmented equilibrium with \group{} 1 and 3 participating in the same learner cannot be stable.

\begin{proof}[Proof of Corollary~\ref{coro:convex_hull}]

Let $\phi_1, \phi_2, \cdots, \phi_k \in \R^d$ be the optimal decisions for the \groups{} allocated to the first \player{} and $\psi_1, \psi_2, \cdots, \psi_l \in \R^d$ be the optimal decisions for the \groups{} allocated to  the second \player{}. 
Let $\theta_1$ and $\theta_2$ be the decisions of each \player{}. 
Assume that the convex hulls of $\{\phi_i\}_{i=1}^k$ and $\{\psi\}_{i=1}^l$ intersect.
By Lemma \ref{lem:chull}, there exists $i$ such that $\Vert \phi_i - \theta_2\Vert \leq \Vert \phi_i - \theta_1\Vert$.
By the assumption about the risk runctions, this implies $\risk_i(\theta_2) < \risk_i(\theta_1)$.
In other words, there exist a \group{} that would prefer to switch \players{}. Thus by Theorem \ref{thm:segmented_market_eq} these allocation of \groups{} to \player{} is not stable and so the convex hulls must not intersect.
\end{proof}

\balancedEq*
\begin{proof}[Proof of Theorem~\ref{thm:balanced_eq}]

Theorem~\ref{prop:eq_min_connection} shows that an equilibrium cannot be stable if it is not a local minimum of the total risk.
We therefore develop conditions under which an equilibrium point will be a local minimum.
By Lemma~\ref{lem:min_equiv}, 
it is equivalent to argue about the local minima of the concave function $F(\partic)$ over the simplex product $\Delta_\numplayer^{\numgroup}$.
All minima of the total risk will occur on the boundary of
the simplex product, i.e. a face or a vertex.
Since $F$ is still concave when restricted to a face of the simplex, the same argument shows the minima are on the boundary, hence vertices, except for the degenerate case where $F$ takes a constant value over the face.

We now characterize this degenerate case. $F$ takes a constant value over the face if and only if 1) the gradient of $F$ is perpendicular to the face at $\alpha$ and 2) remains perpendicular along the face.
The face is described by a set of indices $\mathcal J \subseteq [\numplayer]$.
Mathematically, we write the two conditions as: for all pairs $j,j'\in\mathcal J$, $\ell\in[\numgroup]$, and $k\in[\numplayer]$,
\begin{align}\label{eq:condition_face}
    \frac{\partial F(\alpha)}{\partial{\alpha_{ij} }} = \frac{\partial F(\alpha)}{\partial{\alpha_{ij'} }}\quad \text{and} \quad \frac{\partial}{\partial \alpha_{\ell k}} \left( \frac{\partial F(\alpha)}{\partial{\alpha_{ij} }} - \frac{\partial F(\alpha)}{\partial{\alpha_{ij'} }}\right)=0
\end{align}

Using  Lemma~\ref{lem:local_min_char}, the first expression simplifies to the \emph{risk equivalent} condition that $\risk_\indgroup(\theta^\eq_j) = \risk_\indgroup(\theta^\eq_{j'})$.
Turning to the second expression in \eqref{eq:condition_face}, we first use Lemma \ref{lem:partial_deriv} to compute
\[\frac{\partial}{\partial \alpha_{\ell k}} \frac{\partial F(\alpha)}{\partial{\alpha_{ij} }} =\begin{cases} 0 & k\neq j \\-\groupsize_\indgroup \nabla\risk_i(\theta^\eq_j)^\top \left(\sum_{\ell'}\beta_{\ell'}\alpha_{\ell'j}\nabla^2\risk_{\ell'}(\theta^\eq_j)\right) \nabla \risk_\ell(\theta^\eq_j) & k=j \end{cases}\]
Thus, the condition trivially holds for $k\notin \{j,j'\}$.
Otherwise, when $\ell=i$, the condition in \eqref{eq:condition_face} requires that
\[\nabla\risk_i(\theta^\eq_k)^\top \left(\sum_{\ell'}\beta_{\ell'}\alpha_{\ell'k}\nabla^2\risk_{\ell'}(\theta^\eq_k)\right) \nabla \risk_i(\theta^\eq_k) = 0,\quad k\in\{j,j'\}\]
Due to the strong convexity of the risks, the
Hessian matrix is positive definite. 
Thus it must be that $\nabla \risk_i(\theta^\eq_j)=0$ for all $j\in\mathcal J$, i.e. the \emph{risk optimal} condition. Risk optimality implies that the condition holds also when $\ell\neq i$ and thus the characterization is complete.
\end{proof}

\subsection{Social Welfare}
\begin{proof}[Proof of Proposition~\ref{prop:socialwelf}]
Social welfare is non-decreasing (or increasing) if and only if total risk is non-increasing (or decreasing), as guaranteed by Proposition~\ref{prop:total_risk_dec}.
Maxima of the social welfare are equivalent to minima of the total risk and therefore the connections to stable equilibria follow by Theorem~\ref{prop:eq_min_connection}.
\end{proof}

\begin{proof}[Proof of Proposition \ref{prop:add_player}]
By construction $(\tilde\partic^\eq, \tilde\alltheta^\eq)$ is not segmented, and neither is it a stable balanced equilibrium (by the non-optimality assumption).
Therefore, it is not stable (Theorem~\ref{thm:balanced_eq}), and thus not a local minimum of the total risk (Theorem~\ref{prop:eq_min_connection}). A perturbation will thus send the system along a risk-reducing trajectory.
\end{proof}

\section{Detailed Examples}

\subsection{Risk Reducing and Minimizing Dynamics}\label{sec:appendix}

\ExampleRR*

\begin{proof}
To see that the \group{} is risk minimizing, first see that
\begin{align*}
   \bar\risk^{\groupm}_\indgroup \left(\groupalpha^{t+1}, \alltheta \right) 
   &=\sum_{j=1}^m \alpha_{ij}^{t+1}\risk_i(\theta_j)\\
   &= \sum_{j=1}^m \frac{\partic_{\indgroup\indplayer}^t\cdot \exp(-\gamma  \risk_\indgroup(\theta_\indplayer))}{{\sum_{j=1}^m\partic_{\indgroup\indplayer}^t\cdot \exp(-\gamma  \risk_\indgroup(\theta_\indplayer))}} \risk_i(\theta_j)\\
   &< \sum_{j=1}^m \alpha_{ij}^{t}\risk_i(\theta_j) = \bar\risk^{\groupm}_\indgroup \left(\groupalpha^{t}, \alltheta \right) 
\end{align*}
where the strict inequality holds
as long as $\alpha_{ij}^t$ is not on the boundary of the simplex.
Second, observe that for a fixed $\Theta$, $\alpha_{ij}^t\to 1$ if and only if $\risk_i(\theta_j)$ is minimal over all \players{} for which $\alpha_{ij}^0>0$.

To see that the \player{} is risk minimizing, notice that the gradient update is equivalently
$$\theta_j^{t+1} =  \theta_j^t - \gamma_t \nabla_{\theta}  \bar\risk^{\playerm}_\indplayer\left(\playeralpha, \theta_{\indplayer} \right) \:. $$
Gradient descent on an $L$-smooth and convex function leads to strictly decreasing objective values when $\theta_j^t$ is not at a minimum and the step size satisfies $\gamma^t<\frac{2}{L}$. It further converges to a minimum in the limit as long as the step size satisfies the Robbins-Munroe condition (see, e.g. \cite{liu2022almost,blogpost}).
\end{proof}

\begin{example}[Non-continuity of allocation updates]
Suppose a population prefers one learner over others, and only shifts participation away from the preferred learner if there is another with risk smaller by at least $R_0>0$. This is risk reducing but not minimizing in the limit.
\end{example} 

\begin{example}[Shifting to lower-risk models]
 If a \group{}'s allocation updates always shift allocation from  \players{} with high \group{} risk to \players{} with lower \group{} risk, then the allocation is risk reducing.  It may or may not be risk minimizing in the limit.
\end{example}

\begin{example}[Allocations determined by gradient descent]
Consider an allocation determined by (projected) gradient descent with respect to a \group{}'s average risk. This is risk-reducing, and may be risk minimizing in the limit depending on the step-size.
\end{example}

\subsection{Stability}\label{sec:stability ex}

To illustrate the subtleties of determining stability when the total risk function has non-isolated local minima, %
we consider a setting with $n=m=2$ \groups{} and \players{} where $\risk_1(\theta)=\risk_2(\theta) = \theta^2$.
Then the total risk function is minimized for any $\alpha\in\Delta_m^n$ and $\Theta=(0,0)$.
This continuum of minima can contain equilibria of risk minimizing dynamics, and those equilibria may be stable, asymptotically stable, or unstable, which we illustrate with the following examples.

\begin{example}[Continuum of stable balanced markets] \label{ex:s_bm}
    Suppose that \groups{} update their allocation via any Lipschitz continuous risk minimizing update rule which is stationary whenever \players{} are risk equivalent (i.e. $\mc{R}_i(\theta_1) = \mc{R}_i(\theta_2)$).
    Suppose that learners update via full risk minimization.
    Then equilibria will have the form $(\alpha^\eq,(0,0))$ for any $\alpha^\eq\in\Delta_2^2$.
    
    Then starting from any $(\alpha^0,\Theta^0)$ with a $\delta_\alpha, \delta_\theta$ ball of any equilibrium $(\alpha^\eq,\Theta^\eq)$,
    \[\alpha^1 = \nu(\alpha^0, \Theta^0),\quad \Theta^1 = (0,0)\]
    at which point the system is in a new equilibrium, since any allocation $\alpha$ is a fixed point when $\Theta=(0,0)$ so $\alpha^t=\alpha^1$ and $\Theta^t=\Theta^1$ for all $t$.
    We have that $\|\Theta^\eq-\Theta_0\|=0$ and
    \[\|\alpha^\eq-\alpha^1\| = \|\nu(\alpha^\eq,\Theta^\eq)- \nu(\alpha^0, \Theta^0)\| \]
    By the assumption of Lipschitzness, this distance will scale linearly in $\delta_\alpha,\delta_\theta$
    so the definition of stability is satisfied for $\delta$ chosen proportionally to $\epsilon$ depending on the Lipschitz constant of $\nu$.
    
    In this example, any perturbation converges to a new fixed point within one time step.
The continuity of the update functions ensures that the new fixed point is within a bounded distance of the original, satisfying the definition of stability. This example is not asymptotically stable: the allocation does not convergence back to the original point.
\end{example}

\begin{example}[Asymptotically stable segmentation] \label{ex:as_sm}
Consider the \group{} and \player{} update rules as in the prior example, with one amendment. When $\mc{R}_i(\theta_1) = \mc{R}_i(\theta_2)$, \group{} 1 re-allocates half of its mass from learner $2$ to learner $1$, and while \group{} 2 re-allocates half its mass from learner $1$ to learner $2$.
Thus the \group{} update can be written as
\[
\alpha_{1,:}^{t+1} = \begin{cases}  \nu_1(\alpha_{1,:}^t,\Theta^t) & \mc{R}_1(\theta_1) \neq \mc{R}_1(\theta_2)\\
 \begin{bmatrix}1&1/2\\ 0&1/2\end{bmatrix}   \alpha_{1,:}^{t} & \mc{R}_1(\theta_1) = \mc{R}_1(\theta_2)
\end{cases},\quad 
\alpha_{2,:}^{t+1} = \begin{cases}  \nu_2(\alpha_{2,:}^t,\Theta^t) & \mc{R}_2(\theta_1) \neq \mc{R}_2(\theta_2)\\
 \begin{bmatrix}0&1/2\\ 1&1/2\end{bmatrix}  \alpha_{2,:}^{t+1} & \mc{R}_2(\theta_1) = \mc{R}_2(\theta_2)
\end{cases}\]
The only equilibrium has $\alpha^\eq$ segmented with \group{} $i$ associated to \player{} $i$ for $i=1,2$ and $\Theta^\eq=(0,0)$. It is straightforward to see that this is an asymptotically stable equilibrium, since for any $a\in \Delta_2$,
\[\lim_{t\to\infty} \begin{bmatrix}1&1/2\\ 0&1/2\end{bmatrix}^t a = \begin{bmatrix}1&1\\ 0&0\end{bmatrix}a = \begin{bmatrix}1\\0\end{bmatrix}
\quad\text{and}\quad 
\lim_{t\to\infty} \begin{bmatrix}0&1/2\\ 1&1/2\end{bmatrix}^ta = \begin{bmatrix}0&0\\ 1&1\end{bmatrix}a = \begin{bmatrix}0\\1\end{bmatrix}.\]
\end{example}

\begin{example}[Asymptotically stable balanced market]\label{ex:as_bm}
Consider a setting similar to the previous example
except that when $\mc{R}_i(\theta_1) = \mc{R}_i(\theta_2)$, \group{} $i$ moves half the mass from group $1$ to group $2$ and half the mass from group $2$ to group $1$ for all $i$.
Then the \group{} update can be written as \[
\alpha_{1,:}^{t+1} = \begin{cases}  \nu_1(\alpha_{1,:}^t,\Theta^t) & \mc{R}_1(\theta_1) \neq \mc{R}_1(\theta_2)\\
 \begin{bmatrix}1/2&1/2\\ 1/2&1/2\end{bmatrix}   \alpha_{1,:}^{t} & \mc{R}_1(\theta_1) = \mc{R}_1(\theta_2)
\end{cases},\quad 
\alpha_{2,:}^{t+1} = \begin{cases}  \nu_2(\alpha_{2,:}^t,\Theta^t) & \mc{R}_2(\theta_1) \neq \mc{R}_2(\theta_2)\\
 \begin{bmatrix}1/2&1/2\\ 1/2&1/2\end{bmatrix}  \alpha_{2,:}^{t+1} & \mc{R}_2(\theta_1) = \mc{R}_2(\theta_2)
\end{cases}\]
The only equilibrium has $\alpha_{i,:}^\eq=[1/2, 1/2]$ for $i=1,2$ and $\Theta^\eq=(0,0)$. It is straightforward to see that this is an asymptotically stable equilibrium, since for any $a\in\Delta_2$,
\[\lim_{t\to\infty} \begin{bmatrix}1/2&1/2\\ 1/2&1/2\end{bmatrix}^ta = \begin{bmatrix}1/2&1/2\\ 1/2&1/2\end{bmatrix}a = \begin{bmatrix}1/2\\1/2\end{bmatrix}.\]
\end{example}

\begin{example}[Unstable balanced market] \label{ex:u_bm}
Suppose that \group{} allocations follow a projected gradient descent update for all $i$:
\[\alpha_{i1}^{t+1} = \mathrm{Proj}_{[0,1]}\left(\alpha_{i1}^t - \gamma(\mc{R}_i(\theta_1) - \mc{R}_i(\theta_2))\right)\]
and $\alpha_{i2} = 1-\alpha_{i1}$.
Further suppose \players{} update with gradient descent: 
\[\theta_j^{t+1} =
\theta_j^{t} - \frac{1}{2\sqrt{t}} \nabla \bar\risk_j^{\playerm}(\alpha^t_{:,j}, \theta_j^t)
=
\sqrt{\frac{t}{t+1}}\theta_j^t\]
Both rules are risk minimizing in the limit (note that $\theta_j^t = \frac{1}{\sqrt{t}}\theta_j^0$) and have a continuum of equilibria at any $\alpha^\eq \in \Delta_m^n$ and $\Theta^\eq = (0,0)$.
However, we show that the equilibria are not stable.
Consider the initial condition $(\alpha^\eq, (\delta_\theta, 0))$.
We have that
\[\alpha_{i1}^{t+1} 
=\mathrm{Proj}_{[0,1]}\left(\alpha_{i1}^0 - \gamma\delta_\theta^2\sum_{k=1}^t\frac{1}{k} \right) \to 0 \quad\text{as}\quad t\to\infty.\]
No matter how small the perturbation $\delta_\theta$ is, the summation increases with $t$ and participation will converge all weight to learner 2.
A similar argument shows that perturbations exist that will send all participation to learner 1.

In this example, the learners update slowly. 
Despite eventual convergence to the minimizing parameter, 
the accumulating error causes the participation allocation to shift completely to the unperturbed learner, precluding stability.
\end{example}

\subsection{Social Welfare}

We begin with a somewhat generic example with $\numplayer=2$ and $\numgroup=3$
 that illustrates the difference between stable equilibria and social welfare optima.
\begin{example}[Stability vs. optimality] \label{ex:analytic_acmples}
Consider three \groups{} $i\in\{1,2,3\}$ with risks $\|\theta-\phi_i\|_2^2$, sizes $\beta_i$, and two \players{} $j\in\{1,2\}$.
Suppose that the $\partic^\eq$ is such that the \groups{} are partitioned into $\{1\}$ and $\{2,3\}$.
Then we have that
\[\theta_1^\eq = \phi_1,\quad \theta_2^\eq =\frac{\beta_2}{\beta_2+\beta_3}\phi_2 + \frac{\beta_3}{\beta_2+\beta_3}\phi_3\]
By Theorem~\ref{thm:segmented_market_eq}, this is stable if and only if
\[\|\phi_2-\phi_3\|_2 \leq (\beta_2+\beta_3)\min\left\{ \frac{\|\phi_2 - \phi_1\|_2}{\beta_3},  \frac{\|\phi_3 - \phi_1\|_2}{\beta_2}\right\}\:. \]
However, it is only social optimal if and only if $\phi_2$ and $\phi_3$ are relatively close to each other than to $\phi_1$, i.e.
\[\|\phi_2-\phi_3\|_2 \leq \min\left\{\|\phi_2 - \phi_1\|_2, \|\phi_3 - \phi_1\|_2\right\}\:. \]
The set of \group{} optima $\{\phi_1,\phi_2,\phi_3\}$ satisfying the optimality condition are a subset of those satisfying the stability condition. 
As the difference between $\beta_2$ and $\beta_3$ becomes more extreme, the number of settings satisfying the stability but not optimality condition increases.
\end{example}

We use this generic example to illustrate a scenario in which the total risk can be arbitrarily high at a stable equilibria.

\begin{example}\label{ex:local-vs-global}
Suppose there are two \players{} and three \groups{} with sizes $\beta_1=\beta_2=\beta$ and $\beta_3=1-2\beta$ for some $0<\beta<1/2$.
Consider the following:
$\risk_1(\theta) = \theta^2$, $\risk_2(\theta) = (\theta-1)^2$, $\risk_2(\theta) = (\theta-\frac{1-\beta}{1-2\beta} + \epsilon)^2$.
The  social welfare optimizing decision $\alltheta^\star=(1/2,\frac{1-\beta}{1-2\beta} - \epsilon)$ corresponds to total risk $\beta/2$. However, there is a stable equilibrium at $\alltheta^\eq=(0,1 + \epsilon)$ with total risk  $\beta+\frac{(\beta-\epsilon)^2}{1-2\beta}$. For $\beta\to1/2$, the gap becomes arbitrarily large.
\end{example}

Finally, we present an example which illustrates that even in the single \player{} setting, the risk of a \group{} can be arbitrarily worse than the total risk.

\begin{example}[Arbitrarily high risk for minority \group{}]\label{ex:minority-worst-case}
Consider two \groups{} with
$\risk_1(\theta) = \theta^2$ and $\risk_2(\theta) = (\theta-\phi)^2$ with $\beta_1=\beta$ and $\beta_2=1-\beta$ and a single \player{}.
The single equilibrium and total risk minimizer is $\theta_1=(1-\beta)\phi$ with total risk 
$\beta (1-\beta)\phi^2$ and $\risk_2(\theta^\star) = \beta^2\phi^2$. The difference between the two quantities can be arbitrarily high as $\beta$ gets close to 1.
\end{example}
 \section{Additional Experiments with Noisy Dynamics}\label{app:noisy_dynamics}
 Figure \ref{fig:subfig-a} replicates Fig. \ref{fig:cost_curves} from the main text. The magenta-highlighted trajectory starts precisely at the unstable equilibrium, while the other three, initiated near this point, converge to the three possible split market equilibria, ordered by hue intensity: \{(1,2), (3)\}, \{(2,3), (1)\}, and \{(1,3), (2)\}.  In Figure \ref{fig:subfig-b}, while sub-population dynamics remain as in (a), learner updates experience uncorrelated external perturbations, causing trajectories to be different from (a). Nevertheless, the long term dynamics gravitate near stable split equilibria. Figure \ref{fig:subfig-c} depicts learners updating decisions based on sampled empirical losses, with sub-populations adjusting participation based on aggregate empirical performance. The fact that each learner uses different samples from each sub-population adds sufficient un-correlated noise to create trajectories similar to when exogenous noise is added.
\begin{figure}[h]
    \centering
    \begin{subfigure}{0.95\textwidth}
    \centering
        \includegraphics[width=0.6\textwidth]{figures/risk_dynamics_0.pdf}
        \caption{Learner updates: \textbf{noiseless} one-step minimization of \textbf{population} loss. Sub-population updates: MWU w.r.t \textbf{population} loss}
        \label{fig:subfig-a}
    \end{subfigure}
    \hfill
    \begin{subfigure}{0.95\textwidth}
    \centering
        \includegraphics[width=0.6\textwidth]{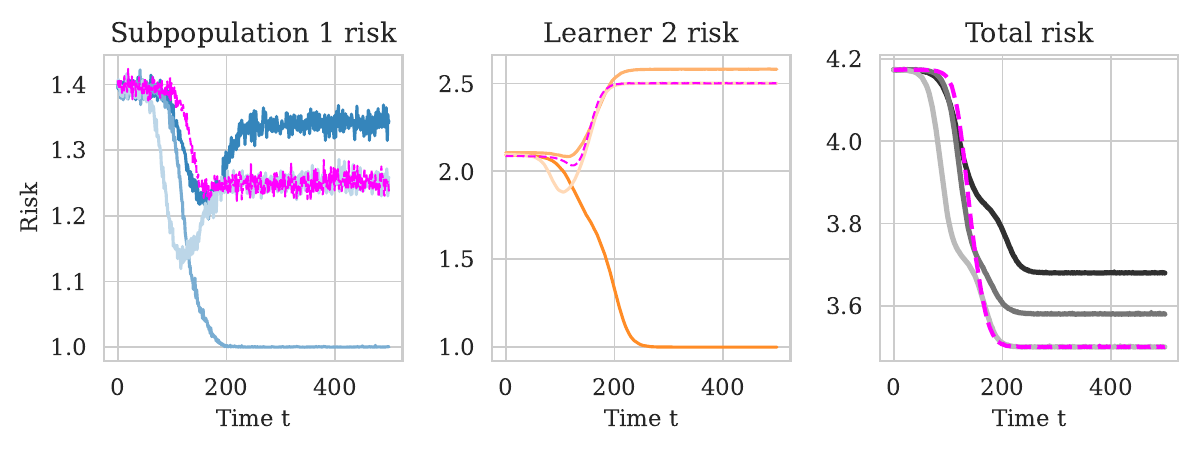}
        \caption{Learner updates: \textbf{noisy} one-step minimization of \textbf{population} loss. Sub-population updates: MWU w.r.t \textbf{population} loss}
        \label{fig:subfig-b}
    \end{subfigure}
    \begin{subfigure}{0.95\textwidth}
    \centering
        \includegraphics[width=0.6\textwidth]{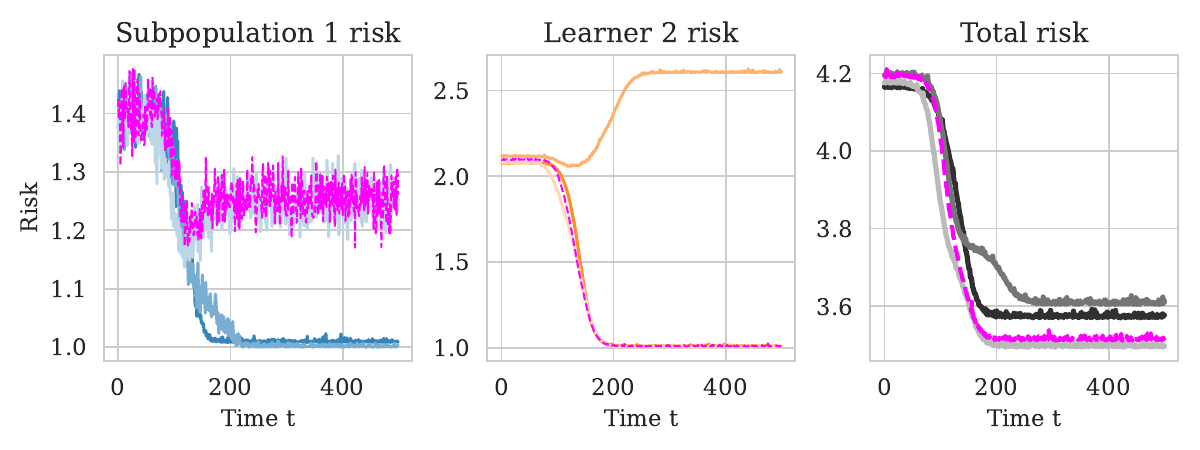}
        \caption{Learner updates: \textbf{noiseless} one-step minimization of \textbf{empirical} loss. Sub-population updates: MWU w.r.t \textbf{empirical} loss}
        \label{fig:subfig-c}
    \end{subfigure}
    \caption{Noisy dynamics}
    \label{fig:main}
\end{figure}
\section{Experimental Details}
Full experimental details along with instructions for reproducing them can be found at 
\url{https://github.com/mcurmei627/MultiLearnerRiskReduction}.
The experiments used Python 3.10 on a MacBook Pro 2019.

\end{document}